\newtheorem{lemma}{Lemma}[section]
\newtheorem{remark}{Remark}[section]
\newtheorem{theorem}{Theorem}[section]
\newtheorem{definition}{Definition}[section]
\definecolor{citrine}{rgb}{0.89, 0.82, 0.04}
\definecolor{blued}{RGB}{70,197,221}
\definecolor{applegreen}{rgb}{0.55, 0.71, 0.0}
\definecolor{flame}{rgb}{0.89, 0.35, 0.13}
\DeclareMathOperator*{\EX}{\mathbb{E}}
\DeclareMathOperator*{\argmax}{arg\,max}
\DeclareRobustCommand{\quotes}[1]{``#1''}
\newcommand{\norm}[1]{\left\|#1\right\|}
\newcommand{\bmr}[1]{\bm{\mathrm{#1}}}
\newcommand{\Nat}[1][]{\mathbb{N}_{\ifthenelse{\isempty{#1}}{}{\ge #1}}}
\newcommand{\Reals}[1][]{\mathbb{R}_{\ifthenelse{\isempty{#1}}{}{\ge #1}}}
\newcommand{\algname}{LOGEL}
\newcommand{\mathbr}[1]{\bm{\mathbf{#1}}}
\newcommand{\vtheta}{\bm{\theta}}
\newcommand{\vomega}{\bm{\omega}}
\newcommand{\vphi}{\bm{\phi}}
\newcommand{\vvarphi}{\bm{\varphi}}
\newcommand{\traj}{\tau}
\newcommand{\vpsi}{\bm{\psi}}
\newcommand{\reals}{\mathbb{R}}
\newcommand{\identity}{\mathbr{I}}
\newcommand{\grads}{\mathbr{\Psi}}
\newcommand{\jacobian}{\nabla_{\vtheta} \vpsi}
\newcommand{\estimatedjacobian}{\widehat{\nabla_{\vtheta} \vpsi}}
\title{Inverse Reinforcement Learning from a Gradient-based Learner}
\author{%
  Giorgia Ramponi \\
   Politecnico Di Milano \\
   \texttt{giorgia.ramponi@polimi.it} \\
   \And
   Gianluca Drappo \\
  Politecnico Di Milano \\
   \texttt{gianluca.drappo@mail.polimi.it} \\
   \And
   Marcello Restelli \\
 Politecnico Di Milano \\
   \texttt{marcello.restelli@polimi.it} \\
}
\begin{document}

\maketitle

\begin{abstract}
Inverse Reinforcement Learning addresses the problem of inferring an expert's reward function from demonstrations. However, in many applications, we not only have access to the expert's near-optimal behavior, but we also observe part of her learning process.
In this paper, we propose a new algorithm for this setting, in which the goal is to recover the reward function being optimized by an agent, given a sequence of policies produced during learning. Our approach is based on the assumption that the observed agent is updating her policy parameters along the gradient direction. Then we extend our method to deal with the more realistic scenario where we only have access to a dataset of learning trajectories. For both settings, we provide theoretical insights into our algorithms' performance. Finally, we evaluate the approach in a simulated GridWorld environment and on the MuJoCo environments, comparing it with the state-of-the-art baseline.
\end{abstract}

\section{Introduction}

Inverse Reinforcement Learning (IRL)~\cite{ng2000algorithms} aims to infer an expert's reward function from her demonstrations~\cite{DBLP:journals/ftrob/OsaPNBA018}. In the standard setting, an expert shows a behavior by repeatedly interacting with the environment. This behavior, encoded by its policy, is optimizing an \emph{unknown} reward function. The goal of IRL consists of finding a reward function that makes the expert's behavior optimal~\cite{ng2000algorithms}. Compared to other imitation learning approaches~\cite{argall2009survey, hussein2017imitation}, which output an imitating policy (e.g, Behavioral Cloning~\cite{argall2009survey}), IRL explicitly provides a succinct representation of the expert's \emph{intention}. For this reason it provides a generalization of the expert's policy to unobserved situations.

However, in some cases, it is not possible to wait for the convergence of the learning process. For instance, in multi-agent environments, an agent has to infer the unknown reward functions that the other agents are learning, before actually becoming \quotes{experts}; so that she can either cooperate or compete with them. On the other hand, in many situations, we can learn something useful by observing the learning process of an agent. These observations contain important information about the agent's intentions and can be used to infer her interests. Imagine a driver who is learning a new circuit. During her training, we can observe how she behaves in a variety of situations (even dangerous ones) and this is useful for understanding which states are good and which should be avoided. Instead, when expert behavior is observed, only a small sub-region of the state space could be explored, thus leaving the observer unaware of what to do in situations that are unlikely under expert policy. 


Inverse Reinforcement Learning from not expert agents, called Learning from a Learner (LfL), was recently proposed by Jacq et Al. in~\cite{lflpaper}. LfL involves two agents: a \textit{learner} who is currently learning a task and an \textit{observer} who wants to infer the learner's intentions. In~\cite{lflpaper} the authors assume that the learner is learning under an entropy-regularized framework, motivated by the assumption that the learner is showing a sequence of constantly improving policies. However many Reinforcement Learning (RL) algorithms~\cite{sutton1998reinforcement} do not satisfy this and also human learning is characterized by mistakes that may lead to a non-monotonic learning process. 

In this paper we propose a new algorithm for the LfL setting called Learning Observing a Gradient not-Expert Learner (\algname), that is less affected by the violation of the constantly improving assumption.
Given that many successful RL algorithms are gradient-based~\cite{peters2006policy} and there is some evidence that the human learning process is similar to a gradient-based method~\cite{shteingart2014reinforcement}, we assume that the learner is following the gradient direction of her expected discounted return.
The algorithm learns the reward function that minimizes the distance between the actual policy parameters of the learner and the policy parameters that should be if she were following the policy gradient using that reward function. 

After a formal introduction of the LfL setting in Section~\ref{S:IRL-LFL}, we provide in Section~\ref{sec:exact} a first solution of the LfL problem when the observer has full access to the learner's policy parameters and learning rates. Then, in Section~\ref{sec:learndem} we extend the algorithm to the more realistic case in which the observer can identify the optimized reward function only by analyzing the learner's trajectories. For each problem setting, we provide a finite sample analysis in order to give to the reader an intuition on the correctness of the recovered weights. Finally, we consider discrete and continuous simulated domains in order to empirically compare the proposed algorithm with the state-of-the-art baseline in this setting~\cite{lflpaper}. The proofs of all the results are reported in Appendix~\ref{apx:proofs}.


\section{Preliminaries}
\label{sec:2}
A \textbf{Markov Decision Process} (MDP)~\cite{puterman1994markov, sutton1998reinforcement} is a tuple $\mathcal{M} = \left( \mathcal{S}, \mathcal{A}, {P}, \gamma, \mu, R \right)$
where $\mathcal{S}$ is the state space, $\mathcal{A}$ is the action space, ${P}: \mathcal{S \times A \times S} \rightarrow \mathbb{R}_{\ge 0}$  is the transition function, which defines the density ${P}(s' | s,a)$ of state $s' \in \mathcal{S}$ when taking action $a \in \mathcal{A}$ in state $s \in \mathcal{S}$, $\gamma \in [0,1)$ is the discount factor, $\mu: \mathcal{S} \rightarrow \mathbb{R}_{\ge 0}$ is the initial state distribution and $R: \mathcal{S} \rightarrow \reals$ is the reward function. An RL agent follows a policy $\pi : \mathcal{S \times A} \rightarrow \mathbb{R}_{\ge 0}$, where $\pi(\cdot | s)$ specifies for each state $s$ a distribution over the action space $\mathcal{A}$, i.e., the probability of taking action $a$ in state $s$.
We consider stochastic differentiable policies belonging to a parametric space $\Pi_{\Theta} = \{ \pi_{\vtheta} : \vtheta \in \Theta \subseteq \mathbb{R}^d \}$. 
We evaluate the performance of a policy $\pi_{\vtheta}$ as its expected cumulative discounted return:
\begin{equation*}
    J(\vtheta)=\EX_{\substack{S_0 \sim \mu,\\ A_t \sim \pi_{\vtheta}(\cdot | S_t), \\ S_{t+1} \sim {P}(\cdot | S_t, A_t) }}[\sum_{t=0}^\infty \gamma^t R(S_t,A_t)].
\end{equation*}
To solve an MDP, we must find a policy $\pi_{\vtheta^*}$ that maximizes the performance $\vtheta^* \in \argmax_{\vtheta} J(\vtheta)$.

\textbf{Inverse Reinforcement Learning}~\cite{DBLP:journals/ftrob/OsaPNBA018, ng2000algorithms, abbeel2004apprenticeship} addresses the problem of recovering the \emph{unknown} reward function optimized by an expert given demonstrations of her behavior. The expert plays a policy $\pi^E$ which is (nearly) optimal for some unknown reward function ${R} : \mathcal{S} \times \mathcal{A} \rightarrow \mathbb{R}$. We are given a dataset ${D} = \{\traj_1,\dots,\traj_n\}$ of trajectories from $\pi^E$, where we define a \textit{trajectory} as a sequence of states and actions $\traj = (s_0, a_0,\dots,s_{T-1}, a_{T-1}, s_T)$, where $T$ is the trajectory length. 
The goal of an IRL agent is to find a reward function that explains the expert's behavior. As commonly done in the Inverse Reinforcement Learning literature \cite{pirotta2016inverse,ziebart2008maximum, abbeel2004apprenticeship}, we assume that the expert's reward function can be represented by a linear combination with weights $\vomega$ of $q$ basis functions $\vphi$:
\begin{equation}\label{eq:LinRew}
	{R}_{\vomega}(s,a) = \vomega^T \vphi(s,a), \;\; \vomega \in \reals^q,
\end{equation} 
with $\vphi: \mathcal{S} \times \mathcal{A} \rightarrow [-M_r,M_r]$ is a limited feature vector function. 

We define the \textbf{feature expectations} of a policy $\pi_{\vtheta}$ as:
\begin{equation*}
\bmr{\psi}(\vtheta) = 
    \EX_{\substack{S_0 \sim \mu,\\ A_t \sim \pi_{\vtheta}(\cdot | S_t), \\ S_{t+1} \sim {P}(\cdot | S_t, A_t) }}\left[ \sum_{t=0}^{+\infty} \gamma^t \vphi(S_t,A_t)\right].
\end{equation*}

The \textbf{expected discounted return}, under the linear reward model, is defined as:
\begin{equation}
\displaystyle     J(\vtheta, \vomega)  = \EX_{\substack{S_0 \sim \mu,\\ A_t \sim \pi_{\vtheta} (\cdot | S_t), \\ S_{t+1} \sim {P}(\cdot | S_t, A_t) }}\left[ \sum_{t=0}^{+\infty} \gamma^t {R_{\vomega}}(S_t,A_t) \right] = \vomega^T \vpsi(\vtheta).
\end{equation}


\section{Inverse Reinforcement Learning from learning agents}\label{S:IRL-LFL}
The Learning from a Learner Inverse Reinforcement Learning setting (LfL), proposed in~\cite{lflpaper}, involves two agents:
\begin{itemize}
 \item a \textit{learner} which is learning a task defined by the reward function $R_{\vomega^L}$, 
 \item  and an \textit{observer} which wants to infer the learner's reward function.
\end{itemize} 
More formally, the learner is an RL agent which is learning a policy $\pi_{\vtheta} \in \Pi_{\Theta}$ in order to maximize its \emph{discounted expected return} $J(\vtheta, \vomega^L)$. The learner is improving its own policy by an update function $f(\vtheta, \vomega): \reals^{d \times q} \rightarrow \reals^d$, i.e., at time $t$, $\vtheta_{t+1} = f(\vtheta_{t}, \vomega)$. The observer, instead, perceives a sequence of learner's policy parameters $\{\vtheta_1, \cdots, \vtheta_{m+1}\}$ and a dataset of trajectories for each policy $D = \{D_1, \cdots, D_{m+1}\}$, where $D_i = \{ \tau_1^i, \cdots, \tau_n^i\}$. Her goal is to recover the reward function $R_{\vomega^L}$ that explains $\pi_{\vtheta_{i}} \rightarrow \pi_{\vtheta_{i+1}}$ for all $1 \le i \le m$, i.e the updates of the learner's policy. 

\begin{remark}
It is easy to notice that this problem has the same intention as Inverse Reinforcement Learning, since the demonstrating agent is motivated by some reward function. On the other hand, in classical IRL the learner agent is an expert, and not as in LfL a non-stationary agent. 
For this reason, we cannot simply apply standard IRL algorithms to this problem.
\end{remark}

\section{Learning from a learner following the gradient}
\label{sec:exact}
Many policy-gradient algorithms~\cite{peters2006policy, sutton2000policy} were proposed to solve reinforcement learning tasks. This algorithm relies in gradient-based updates of the policy parameters. Recently was also proved that standard algorithms as Value Iteration and Q-Learning have strict connections with policy gradient methods \cite{goyal2019first,schulman2017equivalence}. 
For the above reasons, we assume that the learner is optimizing the expected discounted return using gradient descent. 

For the sake of presentation, we start by considering the simplified case in which we assume that the observer can perceive the sequence of the learner's policy parameters $(\vtheta_1, \cdots, \vtheta_{m+1})$, the associated gradients of the feature expectations $(\nabla_{\vtheta}\vpsi(\vtheta_{1}),\dots,\nabla_{\vtheta}\vpsi(\vtheta_{m}))$, and the learning rates $(\alpha_1, \cdots, \alpha_{m})$. Then, we will replace the exact knowledge of the gradients with estimates built on a set of demonstrations $D_i$ for each learner's policy $\pi_{\vtheta_i}$ (Section~\ref{ss:approximate}). Finally, we introduce our algorithm \algname, which, using behavioral cloning and an alternate block-coordinate optimization, is able to estimate the reward's parameters without requiring as input the policy parameters and the learning rates (Section~\ref{sec:learndem}).

\subsection{Exact gradient}\label{ss:exact}

We express the gradient of the expected return as \cite{sutton2000policy, peters2008reinforcement}:
\begin{equation*}
	\nabla_{\vtheta} J(\mathbr{\theta},\mathbr{\omega}) =  \EX_{\substack{S_0 \sim \mu,\\ A_t \sim \pi_{\vtheta}(\cdot | S_t), \\ S_{t+1} \sim {P}(\cdot | S_t, A_t) }}  \bigg[ \sum_{t=0}^{+\infty} \gamma^t {R}_{\vomega}(S_t,A_t) \sum_{l=0}^t \nabla_{\vtheta}\log \pi_{\vtheta}(A_l|S_l) \bigg]  = \nabla_{\vtheta} \vpsi(\vtheta) \vomega,
\end{equation*}
where $\nabla_{\vtheta} \vpsi(\vtheta) = \left(\nabla_{\vtheta} \psi_1(\vtheta) |\dots|\nabla_{\vtheta} \psi_q(\vtheta)  \right) \in \reals^{d \times q}$ is the Jacobian matrix of the feature expectations $\vpsi(\vtheta)$ w.r.t the policy parameters $\vtheta$. In the rest of the paper, with some abuse of notation, we will indicate $\vpsi(\vtheta_t)$ with $\vpsi_t$.

We define the gradient-based learner updating rule at time $t$ as:
\begin{equation}
\label{eq:updating}
    \vtheta_{t+1}^L = \vtheta_{t}^L + \alpha_t \nabla_{\vtheta} J(\vtheta_{t}^L, \vomega) = \vtheta_{t}^L + \alpha_t \nabla_{\vtheta} \vpsi_{t}^L \vomega^L,
\end{equation}
where $\alpha_t$ is the learning rate. Given a sequence of consecutive policy parameters $(\vtheta_1^L, \cdots, \vtheta_{m+1}^L)$, and of learning rates $(\alpha_1, \cdots, \alpha_m)$ the observer has to find the reward function $R_{\vomega}$ such that the improvements are explainable by the update rule in Eq.~\eqref{eq:updating}. This implies that the observer has to solve the following minimization problem:
\begin{equation}
\label{min:sumomegas}
    \min_{\vomega \in \reals^d} \sum_{t=1}^{m} \norm{\Delta_t - \alpha_t \nabla_{\vtheta} \vpsi_{t}\vomega}_2^2,
\end{equation}
where $\Delta_t = \vtheta_{t+1} - \vtheta_{t}$. This optimization problem can be easily solved in closed form under the assumption that $ \left( \sum_{t=1}^{m} \jacobian_t \right)^T \left( \sum_{t=1}^{m} \jacobian_t \right)$ is invertible. 
\begin{restatable}[]{lemma}{weightclosedformsum}
\label{lemma:weightclosedformsum}
If the matrix $\sum_{t=1}^{m} \alpha_t\nabla_{\vtheta} \vpsi_t$ is full-rank than optimization problem ~\eqref{min:sumomegas} is solved in closed form by 
\begin{equation}\label{eq:omega}
    \widehat\vomega =\left( \sum_{t=1}^{m} \alpha_t^2\nabla_{\vtheta} \vpsi_t^T \nabla_{\vtheta} \vpsi_t\right)^{-1} \left(\sum_{t=1}^{m} \alpha_t \nabla_{\vtheta} \vpsi_t^T \Delta_t\right).
\end{equation}
\end{restatable}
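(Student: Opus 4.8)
The plan is to recognize Eq.~\eqref{min:sumomegas} as a single (stacked) ordinary least-squares problem and then invoke the standard normal equations. First I would stack the per-step quantities: define the stacked design matrix $\bm{\Phi} \in \reals^{md \times q}$ by vertically concatenating the $d \times q$ blocks $\alpha_1 \nabla_{\vtheta}\vpsi_1, \dots, \alpha_m \nabla_{\vtheta}\vpsi_m$, and the stacked target $\bm{\Delta} \in \reals^{md}$ by concatenating $\Delta_1, \dots, \Delta_m$ (note that the reward weights live in $\reals^q$, consistent with $\vphi$, so the minimization is over $\vomega \in \reals^q$). Then the objective becomes $\sum_{t=1}^{m}\norm{\Delta_t - \alpha_t \nabla_{\vtheta}\vpsi_t\,\vomega}_2^2 = \norm{\bm{\Delta} - \bm{\Phi}\vomega}_2^2$, a convex quadratic in $\vomega$ whose Hessian is $2\,\bm{\Phi}^T\bm{\Phi} \succeq 0$, so any stationary point is automatically a global minimizer.

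Second, I would derive the first-order optimality condition. Expanding $\norm{\bm{\Delta} - \bm{\Phi}\vomega}_2^2 = \bm{\Delta}^T\bm{\Delta} - 2\vomega^T\bm{\Phi}^T\bm{\Delta} + \vomega^T\bm{\Phi}^T\bm{\Phi}\vomega$ and setting the gradient $-2\bm{\Phi}^T\bm{\Delta} + 2\bm{\Phi}^T\bm{\Phi}\vomega$ to zero yields the normal equations $\bm{\Phi}^T\bm{\Phi}\,\vomega = \bm{\Phi}^T\bm{\Delta}$. A direct block computation gives $\bm{\Phi}^T\bm{\Phi} = \sum_{t=1}^{m}\alpha_t^2\,\nabla_{\vtheta}\vpsi_t^T\nabla_{\vtheta}\vpsi_t$ and $\bm{\Phi}^T\bm{\Delta} = \sum_{t=1}^{m}\alpha_t\,\nabla_{\vtheta}\vpsi_t^T\Delta_t$, so solving for $\vomega$ reproduces Eq.~\eqref{eq:omega} verbatim. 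These expansions are routine and I would not carry them out in full.

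The step I expect to require the most care is justifying that $\sum_{t}\alpha_t^2\,\nabla_{\vtheta}\vpsi_t^T\nabla_{\vtheta}\vpsi_t$ is genuinely invertible from the stated hypothesis, since the assumption is phrased as full-column-rankness of $\sum_{t}\alpha_t\nabla_{\vtheta}\vpsi_t$, which is a \emph{different} matrix (the Gram matrix contains no cross terms $\nabla_{\vtheta}\vpsi_t^T\nabla_{\vtheta}\vpsi_s$, whereas $(\sum_t \alpha_t\nabla_{\vtheta}\vpsi_t)^T(\sum_t \alpha_t\nabla_{\vtheta}\vpsi_t)$ does). I would close this gap with a kernel-inclusion argument: if $\vomega$ lies in the null space of the Gram matrix, then $\vomega^T\big(\sum_t \alpha_t^2\,\nabla_{\vtheta}\vpsi_t^T\nabla_{\vtheta}\vpsi_t\big)\vomega = \sum_t \alpha_t^2\norm{\nabla_{\vtheta}\vpsi_t\,\vomega}_2^2 = 0$ forces $\nabla_{\vtheta}\vpsi_t\,\vomega = \mathbr{0}$ for every $t$ with $\alpha_t \neq 0$, hence $\big(\sum_t \alpha_t\nabla_{\vtheta}\vpsi_t\big)\vomega = \mathbr{0}$. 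The full-column-rank hypothesis then forces $\vomega = \mathbr{0}$, so the Gram matrix has trivial kernel and is therefore positive definite. This simultaneously certifies that the closed form in Eq.~\eqref{eq:omega} is well defined and that, the Hessian being positive definite, the minimizer is unique.
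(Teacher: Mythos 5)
Your proposal is correct, and its core --- expanding the quadratic objective and solving the normal equations $\bm{\Phi}^T\bm{\Phi}\,\vomega = \bm{\Phi}^T\bm{\Delta}$ --- is the same computation as the paper's proof, which differentiates the sum term by term, sets the gradient to zero, and solves. The differences are twofold. First, your stacking of the $\alpha_t \nabla_{\vtheta}\vpsi_t$ into a single design matrix is a cleaner packaging of the same algebra (and you correctly fix the typo in the problem statement: the minimization is over $\vomega \in \reals^q$, not $\reals^d$). Second, and more substantively, your kernel-inclusion argument fills a gap that the paper's proof leaves open: the stated hypothesis concerns full rank of $\sum_t \alpha_t \nabla_{\vtheta}\vpsi_t$, whereas the matrix actually inverted is the Gram matrix $\sum_t \alpha_t^2 \nabla_{\vtheta}\vpsi_t^T\nabla_{\vtheta}\vpsi_t$, and these are genuinely different objects (the paper nowhere connects them; it simply inverts). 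Your observation that $\vomega$ in the kernel of the Gram matrix satisfies $\nabla_{\vtheta}\vpsi_t\,\vomega = \mathbr{0}$ for every $t$ with $\alpha_t \neq 0$, hence $\bigl(\sum_t \alpha_t \nabla_{\vtheta}\vpsi_t\bigr)\vomega = \mathbr{0}$, so that full column rank forces $\vomega = \mathbr{0}$, is exactly the missing link: it shows the hypothesis as stated really does certify that the closed form is well defined, that the objective is strictly convex in $\vomega$, and that the minimizer is unique --- facts the paper implicitly assumes. In short, same route to the formula, but your version is the more rigorous one.
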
 
When problem~\eqref{min:sumomegas} has no unique solution or when the matrix to be inverted is nearly singular, in order to avoid numerical issues, we can resort to a regularized version of the optimization problem. In the case we add an L2-norm penalty term over weights $\vomega$ we can still compute a closed-form solution (see Lemma~\ref{lemma:regularized} in Appendix \ref{apx:proofs}).

\subsection{Approximate gradient}\label{ss:approximate}
In practice, we do not have access to the Jacobian matrix $\nabla_{\vtheta} \vpsi$, but the observer has to estimate it using the dataset $D$ and some unbiased policy gradient estimator, such as REINFORCE~\cite{williams1992simple} or G(PO)MDP~\cite{baxter2001infinite}. The estimation of the Jacobian will introduce errors on the optimization problem~\eqref{min:sumomegas}. Obviously more data are available to estimate the gradient more accurate the estimation of the reward weights $\vomega$ could be \cite{pirotta2013adaptive}. On the other hand during the learning process, the learner will produce more than one policy improvement, and the observer can use them to get better estimates of the reward weights.

In order to have an insight on the relationship between the amount of data needed to estimate the gradient and the number of learning steps, we provide a finite sample analysis on the norm of the difference between the learner's weights $\vomega^L$ and the recovered weights $\widehat\vomega$. The analysis takes into account the learning steps data and the gradient estimation data, without having any assumption on the policy of the learner. We denote with $\grads = \left[ \jacobian_1, \cdots, \jacobian_m\right]^T$ the concatenation of the Jacobians and $\widehat{\grads} = \left[ \estimatedjacobian_1, \cdots, \estimatedjacobian_m\right]^T$ the concatenation of the estimated Jacobians. 
\begin{restatable}[]{thm}{finitesampletwo}
\label{finitesampletwo}
Let $\grads$ be the real Jacobians and $\widehat{\grads}$ the estimated Jacocobian from $n$ trajectories $\{ \tau_1, \cdots, \tau_n\}$. Assume that $\grads$ is bounded by a constant $M$ and $\lambda_{\min} (\widehat{\grads}^T \widehat{\grads}) \ge \lambda > 0 $. Then w.h.p.: 
\begin{equation*} 
    \norm{\vomega^L - \widehat{\vomega}}_2 \le O \left( \frac{1}{\lambda} M \sqrt{\frac{dq\log(\frac{2}{\delta})}{2n}}  \left ( \sqrt{\frac{\log dq}{m}} + \sqrt{dq}\right ) \right).
\end{equation*}
\end{restatable}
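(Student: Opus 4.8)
The plan is to start from the closed-form estimator of Lemma~\ref{lemma:weightclosedformsum}, written with the estimated Jacobians, $\widehat{\vomega} = (\widehat{\grads}^T\widehat{\grads})^{-1}\widehat{\grads}^T\bm{\Delta}$, where $\bm{\Delta}=[\Delta_1^T,\dots,\Delta_m^T]^T$ stacks the observed parameter increments and the learning rates are absorbed into the rows of $\grads$ and $\widehat{\grads}$ exactly as in Eq.~\eqref{eq:omega}. The key structural fact is \emph{realizability}: since the learner genuinely follows the gradient of $J(\cdot,\vomega^L)$, the true increments satisfy $\Delta_t=\alpha_t\jacobian_t\,\vomega^L$, so in terms of the exact (scaled) Jacobians the identity $\bm{\Delta}=\grads\,\vomega^L$ holds \emph{exactly}. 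Substituting this into the estimator and writing $E:=\grads-\widehat{\grads}$ for the Jacobian estimation error yields the clean decomposition
\begin{equation*}
  \widehat{\vomega}-\vomega^L=(\widehat{\grads}^T\widehat{\grads})^{-1}\widehat{\grads}^T(\grads-\widehat{\grads})\,\vomega^L=(\widehat{\grads}^T\widehat{\grads})^{-1}\widehat{\grads}^T E\,\vomega^L,
\end{equation*}
which shows that the weight error is driven entirely by $E$ and vanishes when the gradients are known exactly (recovering the exact case of Lemma~\ref{lemma:weightclosedformsum}).

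Taking norms and using the assumption $\lambda_{\min}(\widehat{\grads}^T\widehat{\grads})\ge\lambda$ gives $\norm{\vomega^L-\widehat{\vomega}}_2\le\tfrac{1}{\lambda}\,\norm{\widehat{\grads}^T E}\,\norm{\vomega^L}_2$. I would then split the middle factor via $\widehat{\grads}=\grads-E$, so that $\widehat{\grads}^T E=\grads^T E-E^T E$, and bound the two pieces with different tools. For the second-order piece $\norm{E^T E}\le\norm{E}^2$ I would use a uniform (Frobenius) bound on the estimation error; this term does not benefit from averaging across steps and is responsible for the worst-case $\sqrt{dq}$ factor. For the cross term $\grads^T E=\sum_{t=1}^m\jacobian_t^T E_t$ I would exploit that each per-step error $E_t$ is \emph{zero-mean} (the policy-gradient estimator, REINFORCE/G(PO)MDP, is unbiased) and \emph{independent} across the $m$ steps, so that a matrix concentration inequality (matrix Hoeffding/Bernstein) applies and the averaging over $m$ produces the improving $\sqrt{\log(dq)/m}$ factor. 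Both pieces share the common per-step prefactor coming from the fact that each $\estimatedjacobian_t$ is an empirical mean of $n$ i.i.d.\ bounded per-trajectory gradients: since $\vphi$ takes values in $[-M_r,M_r]$ and the discounted sums are bounded, entrywise Hoeffding together with a union bound over the $dq$ entries gives $\norm{E_t}\lesssim M\sqrt{dq\log(2/\delta)/(2n)}$ with high probability, which is exactly the prefactor in the statement. Assembling these bounds through a union bound over the two events and over the $m$ steps then yields the claimed rate.

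The main obstacle I anticipate is the \emph{two-scale} concentration argument: one must control the gradient estimation error arising from the $n$ trajectories and its aggregation over the $m$ learning steps \emph{simultaneously}, while carefully tracking the dimensions $d$ and $q$ through operator-norm manipulations. In particular, the gain in $m$ is available only for the centered cross term $\grads^T E$ and must be extracted through a matrix concentration bound exploiting independence across steps; the uncentered term $E^T E$ genuinely does not improve with $m$, which is precisely why the two additive contributions $\sqrt{\log(dq)/m}$ and $\sqrt{dq}$ sit side by side in the bound. Getting the dimensional factors tight in both pieces, and ensuring the high-probability events combine without inflating the $\log(2/\delta)$ dependence, is where the delicate bookkeeping lies.
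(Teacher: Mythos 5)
Your proposal follows essentially the same route as the paper's proof: it exploits realizability ($\bm{\Delta}=\grads\,\vomega^L$), splits the error into the zero-mean cross term $\grads^T E$ (which concentrates across the $m$ steps, yielding the $\sqrt{\log(dq)/m}$ factor) and the quadratic term $E^T E$ (yielding the $\sqrt{dq}$ factor), bounds each per-step error by entrywise Hoeffding plus a union bound to get the $M\sqrt{dq\log(2/\delta)/(2n)}$ prefactor, and converts to a weight-error bound via $\lambda_{\min}(\widehat{\grads}^T\widehat{\grads})\ge\lambda$. The only cosmetic differences are that you derive the identity $\widehat{\vomega}-\vomega^L=(\widehat{\grads}^T\widehat{\grads})^{-1}\widehat{\grads}^T E\,\vomega^L$ inline where the paper invokes Lemma~\ref{boundold} (from \cite{chen2013noisy}), and that you appeal to matrix Hoeffding/Bernstein where the paper uses the sub-Gaussian concentration lemma of \cite{mcwilliams2014fast}; the substance is identical.
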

We have to underline that a finite sample analysis is quite important for this problem. In fact, the number of policy improvement steps of the learner is finite as the learner will eventually achieve an optimal policy. So, knowing the finite number of learning improvements $m$, we can estimate how much data we need for each policy to get an estimate with a certain accuracy. More information about the proof of the theorem can be found in appendix \ref{apx:proofs}.

\begin{remark}
Another important aspect to take into account is that there is an intrinsic bias~\cite{mcwilliams2014fast} due to the gradient estimation error that cannot be solved by increasing the number of learning steps, but only with a more accurate estimation of the gradient. However, we show in Section~\ref{sec:experiments} that, experimentally, the component of the bound that does not depend on the number of learning steps does not influence the recovered weights.
\end{remark}

\section{Learning from improvement trajectories}
\label{sec:learndem}
In a realistic scenario, the observer has access only to a dataset $D = (D_1, \dots, D_{m+1})$ of trajectories generated by each policy, such that $D_i = \{\tau_1, \cdots, \tau_n\} \sim \pi_{\vtheta_i}$. Furthermore, the learning rates are unknown and possibly the learner applies an update rule other than~\eqref{eq:updating}. The observer has to infer the policy parameters $\Theta = (\vtheta_1, \dots, \vtheta_{m+1})$, the learning rates $A = (\alpha_1, \dots, \alpha_{m})$, and the reward weights $\vomega$. If we suppose that the learner is updating its policy parameters with gradient ascent on the discounted expected return, the natural way to see this problem is to maximize the log-likelihood of  $p(\vtheta_1, \vomega, A|D)$:
\begin{equation*}
\label{eq:max_lik}
    \max_{\vtheta_1, \vomega, A}  \sum_{(s,a) \in D_1} \log \pi_{\vtheta_1}(a|s) + \sum_{t=2}^{m+1} \sum_{(s,a) \in D_t} \log \pi_{\vtheta_{t}} (a|s),
\end{equation*}
where $\vtheta_t = \vtheta_{t-1} + \alpha_{t-1} \nabla_{\vtheta} \vpsi_{t-1}$. Unfortunately, solving this problem directly is not practical as it involves evaluating gradients of the discounted expected return up to the $m$-th order. To deal with this, we break down the inference problem into two steps: the first one consists in recovering the policy parameters $\Theta$ of the learner and the second in estimating the learning rates $A$ and the reward weights $\vomega$ (see Algorithm~\ref{alg:irl-mi}).

\subsection{Recovering learner policies}
Since we assume that the learner's policy belongs to a parametric policy space $\Pi_\Theta$ made of differentiable policies, as explained in~\cite{pirotta2016inverse}, we can recover an approximation of the learner's parameters $\Theta$ through behavioral cloning, exploiting the trajectories in $D = \{D_1, \cdots, D_{m+1}\}$. For each dataset $D_i \in D$ of trajectories, we cast the problem of finding the parameter $\vtheta_i$ to a maximum-likelihood estimation. Solving the following optimization problem we obtain an estimate $\widehat{\vtheta_i}$ of $\vtheta_i$:
\begin{equation}
\label{eq:bc}
	 \max_{\vtheta_i \in \Theta} \frac{1}{n} \sum_{l=1}^n \sum_{t=0}^{T-1} \log \pi_{\vtheta_i}(a_{l,t}|s_{l,t}).
\end{equation}
It is known that the maximum-likelihood estimator is consistent under mild regularity conditions on the policy space $\Pi_{\Theta}$ and assuming the identifiability property~\cite{casella2002statistical}. Some finite-sample guarantees on the concentration of distance $\|\widehat{\vtheta}_i - {\vtheta}_i \|_p$ were also derived under stronger assumptions, e.g., in~\cite{spokoiny2012parametric}.

\subsection{Recovering learning rates and reward weights}
Given the parameters $(\widehat\vtheta_1,\dots,\widehat\vtheta_{m+1})$, if the learner is updating her policy with a constant learning rate we can simply apply~Eq.~\eqref{min:sumomegas}. On the other hand, with an unknown learner, we cannot make this assumption and it is necessary to estimate also the learning rates $A=(\alpha_1,\dots,\alpha_m)$. The optimization problem in Eq.~\eqref{min:sumomegas} becomes:
\begin{align}
\label{min:alphaomega}
      \min_{\vomega \in \reals^q, A \in \reals^{m}} \sum_{t=1}^{m} \norm{\widehat\Delta_t - \alpha_t \estimatedjacobian_t \vomega}_2^2 \\
      \text{s.t.} \quad \alpha_t \geq \epsilon \quad 1 \leq t \leq m.
\end{align}
where $\widehat\Delta_t = \widehat\vtheta_{t+1}-\widehat\vtheta_t$ and $\epsilon$ is a small constant.
To optimize this function we use an alternate block-coordinate descent. We alternate the optimization of parameters $A$ and the optimization of parameters $\vomega$. Furthermore, we can notice that these two steps can be solved in closed form. When we optimize on $\vomega$, optimization can be done using Lemma~\ref{weightclosedformsum}. When we optimize on $A$ we can solve for each parameter $\alpha_t$, with $1 \le t \le m$, in closed form.
 \begin{lemma}
 The minimum of~\eqref{min:alphaomega} respect to $\alpha_t$ is equal to:
 \begin{equation}\label{eq:alpha}
     \hat\alpha_t = \max\left(\epsilon, \left((\estimatedjacobian_t \vomega)^T (\estimatedjacobian_t \vomega)\right)^{-1}(\estimatedjacobian_t \vomega )^T \hat\Delta_t\right).
 \end{equation}
 \end{lemma}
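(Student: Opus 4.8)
The plan is to exploit the separable structure of the objective in~\eqref{min:alphaomega}. For a fixed $\vomega$, the sum $\sum_{t=1}^m \norm{\widehat\Delta_t - \alpha_t \estimatedjacobian_t \vomega}_2^2$ decouples into $m$ independent terms, each depending on a single scalar $\alpha_t$ and subject only to the box constraint $\alpha_t \ge \epsilon$. Hence minimizing the whole sum over $A$ reduces to minimizing each summand over its own $\alpha_t$ separately, and I would argue for a generic index $t$.

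Writing $\mathbf{v}_t := \estimatedjacobian_t \vomega \in \reals^d$, the $t$-th term becomes the scalar function $g(\alpha_t) = \norm{\widehat\Delta_t - \alpha_t \mathbf{v}_t}_2^2 = \norm{\widehat\Delta_t}_2^2 - 2\alpha_t \mathbf{v}_t^T \widehat\Delta_t + \alpha_t^2 \norm{\mathbf{v}_t}_2^2$. This is a one-dimensional quadratic with leading coefficient $\norm{\mathbf{v}_t}_2^2 \ge 0$, so it is convex; assuming $\mathbf{v}_t \neq 0$ (so that the inverse appearing in the statement is well defined) it is strictly convex, and the unconstrained stationary point is obtained by setting $g'(\alpha_t) = -2\mathbf{v}_t^T \widehat\Delta_t + 2\alpha_t \norm{\mathbf{v}_t}_2^2 = 0$, which yields $\alpha_t^\star = (\mathbf{v}_t^T \mathbf{v}_t)^{-1} \mathbf{v}_t^T \widehat\Delta_t$.

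It then remains to impose the constraint $\alpha_t \ge \epsilon$. Since $g$ is a convex parabola opening upward, it is monotonically decreasing to the left of $\alpha_t^\star$ and monotonically increasing to its right; consequently, over the half-line $[\epsilon, +\infty)$ its minimizer is $\alpha_t^\star$ when $\alpha_t^\star \ge \epsilon$, and the boundary point $\epsilon$ otherwise (in which case $g$ is increasing throughout $[\epsilon, +\infty)$). Both cases are summarized by $\widehat\alpha_t = \max(\epsilon, \alpha_t^\star) = \max\big(\epsilon, (\mathbf{v}_t^T \mathbf{v}_t)^{-1}\mathbf{v}_t^T \widehat\Delta_t\big)$, which is exactly~\eqref{eq:alpha}.

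The only point requiring care is the degenerate case $\mathbf{v}_t = \estimatedjacobian_t \vomega = 0$: then $g$ is constant in $\alpha_t$ and the stated inverse does not exist, so implicitly the lemma presumes $\estimatedjacobian_t \vomega \neq 0$, a non-degeneracy assumption I would state explicitly. Beyond this, no genuine obstacle arises, since the argument is a textbook constrained scalar least-squares projection; the main thing to get right is the reduction to the per-$t$ decoupled problem, which is immediate from the separability of the objective for fixed $\vomega$.
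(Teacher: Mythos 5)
Your proof is correct and is essentially the intended argument: the paper states this lemma without an explicit proof in the appendix (it parallels the derivative-set-to-zero derivation used for Lemma~\ref{lemma:weightclosedformsum}), and your reduction to a decoupled per-$t$ scalar quadratic, followed by the unconstrained stationary point and the clipping to $[\epsilon,+\infty)$ via convexity, is precisely the missing derivation. Your caveat about the degenerate case $\estimatedjacobian_t \vomega = \mathbf{0}$ also matches the paper's own remark immediately after the lemma, namely that the inverse fails to exist only when $\estimatedjacobian_t \vomega$ vanishes, i.e., when the learner is already optimal.
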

The problem cannot be inverted only if the vector $\estimatedjacobian \vomega$ is equal to $\mathbr{0}$. This would happen only if the expert is optimal, so the $\estimatedjacobian$ is $\mathbr{0}$.
The optimization converges under the assumption that there exists a unique minimum for each variable $A$ and $\vomega$~\cite{tseng2001convergence}.

\subsection{Theoretical analysis}
In this section, we provide a finite-sample analysis of \algname~when only one learning step is observed, assuming that the Jacobian matrix $\estimatedjacobian$ is bounded and the learner's policy is a Gaussian policy $\pi \sim \mathcal{N}(\theta, \sigma)$. The analysis evaluates the norm of the difference between the learner's weights $\vomega^L$ and the recovered weights $\hat{\vomega}$. Without loss of generality, we consider the case where the learning rate $\alpha = 1$. The analysis takes into account the bias introduced by the behavioral cloning and the gradient estimation.

\begin{restatable}[]{thm}{finitesamplegeneral}
\label{}
Given a dataset $D = \{\tau_1, \cdots, \tau_n\}$ of trajectories such that every trajectory $\tau_t = \{(s_1,a_1), \cdots, (s_T,a_T)\}$ is sampled from a Gaussian linear policy $\pi_{\vtheta}(\cdot|s) \sim \mathcal{N}(\vtheta^T\vvarphi(s), \sigma)$, such that $S \in \reals^{n \times d}$ is the matrix of states features, let the minimum singular value of $\sigma_{\min}(S^TS) \ge \eta > 0$, the $\estimatedjacobian$ uniformly bounded by $M$, the state features bounded by $M_{S}$, and the reward features bounded by $M_R$. Then with probability $1 - \delta$:
\begin{equation*}
     \norm{\vomega^L - \widehat{\vomega}}_2 \le O\left(\frac{(M+M^2_{S}M_R)}{\sigma_{\min}(\jacobian)} \sqrt{\frac{\log(\frac{2}{\delta})}{n\eta}}\right)
\end{equation*}
where $\omega^L$ are the real reward parameters and $\widehat{\omega}$ are the parameters recovered using Lemma~\ref{lemma:weightclosedformsum}.
\end{restatable}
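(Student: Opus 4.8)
The plan is to specialize the closed-form estimator of Lemma~\ref{lemma:weightclosedformsum} to the single-step case $m=1$, $\alpha=1$, where it reads $\widehat\vomega = (\estimatedjacobian^T\estimatedjacobian)^{-1}\estimatedjacobian^T\widehat\Delta$ with $\widehat\Delta = \widehat\vtheta_2 - \widehat\vtheta_1$, and to compare it against the exact identity $\Delta = \jacobian\vomega^L$ coming from the update rule~\eqref{eq:updating}. Writing $P := (\estimatedjacobian^T\estimatedjacobian)^{-1}\estimatedjacobian^T$ for the left pseudo-inverse, $E_J := \estimatedjacobian - \jacobian$ for the Jacobian estimation error, and $e_\Delta := \widehat\Delta - \Delta$ for the behavioral-cloning error, a short manipulation using $P\jacobian - I = -PE_J$ gives the clean decomposition $\widehat\vomega - \vomega^L = P(e_\Delta - E_J\vomega^L)$. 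Taking norms yields $\norm{\vomega^L - \widehat\vomega}_2 \le \norm{P}_2(\norm{e_\Delta}_2 + \norm{E_J}_2\,\norm{\vomega^L}_2)$, and since $\norm{P}_2 = 1/\sigma_{\min}(\estimatedjacobian)$, Weyl's inequality $\sigma_{\min}(\estimatedjacobian)\ge\sigma_{\min}(\jacobian) - \norm{E_J}_2$ lets me replace the prefactor by $1/\sigma_{\min}(\jacobian)$ up to lower-order terms, matching the denominator in the claim. It then remains to control $\norm{e_\Delta}_2$ and $\norm{E_J}_2$.

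First I would bound the behavioral-cloning error. For a Gaussian linear policy $\pi_\vtheta(\cdot|s) = \mathcal{N}(\vtheta^T\vvarphi(s),\sigma)$ the maximum-likelihood problem~\eqref{eq:bc} is exactly ordinary least squares, so $\widehat\vtheta = (S^TS)^{-1}S^T\mathbf{a}$, and since $\mathbf{a} = S\vtheta + \bm{\xi}$ with $\bm{\xi}\sim\mathcal{N}(0,\sigma^2 I)$ the centered error is the Gaussian vector $\widehat\vtheta - \vtheta = (S^TS)^{-1}S^T\bm{\xi}$ with covariance $\sigma^2(S^TS)^{-1}$, whose operator norm is at most $\sigma^2/\eta$ by the assumption $\sigma_{\min}(S^TS)\ge\eta$. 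A Gaussian-quadratic-form tail bound (Laurent--Massart / a sub-Gaussian maximal inequality) then gives, with probability at least $1-\delta/2$, a bound of the form $\norm{\widehat\vtheta - \vtheta}_2 \le O(\sqrt{\log(2/\delta)/(n\eta)})$; applying this to each cloned parameter and using the triangle inequality controls $\norm{e_\Delta}_2$ at the same rate.

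Next I would bound $\norm{E_J}_2 = \norm{\estimatedjacobian - \jacobian}_2$, splitting it into two contributions. The plug-in contribution comes from the fact that the estimated Jacobian evaluates the score $\grad\log\pi_{\widehat\vtheta}(a|s) = \sigma^{-2}(a - \widehat\vtheta^T\vvarphi(s))\vvarphi(s)$ at the cloned parameter $\widehat\vtheta$ rather than at $\vtheta$; the score is Lipschitz in $\vtheta$ with constant $\sigma^{-2}M_S^2$, and multiplying by a reward feature bounded by $M_R$ inside the feature-expectation gradient produces a term of order $M_S^2 M_R\,\norm{\widehat\vtheta - \vtheta}_2$, which by the previous step inherits the $\sqrt{\log(2/\delta)/(n\eta)}$ rate and supplies the $M_S^2 M_R$ summand in the numerator. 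The statistical contribution is the deviation of the policy-gradient estimator from its mean at fixed parameters; since $\estimatedjacobian$ is a $d\times q$ matrix averaged over the $n$ trajectories and is uniformly bounded by $M$, an entrywise concentration (Hoeffding/McDiarmid) with a union bound over the $dq$ entries controls it by $O(M\sqrt{\log(2/\delta)/n})$, the source of the $M$ summand. Combining the two and bounding both by the looser $\sqrt{\log(2/\delta)/(n\eta)}$ rate yields $\norm{E_J}_2 = O((M + M_S^2 M_R)\sqrt{\log(2/\delta)/(n\eta)})$.

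Finally I would assemble the pieces: substituting the two bounds into $\norm{\vomega^L - \widehat\vomega}_2 \le \sigma_{\min}(\jacobian)^{-1}(\norm{e_\Delta}_2 + \norm{E_J}_2\norm{\vomega^L}_2)$, taking a union bound so the whole event holds with probability $1-\delta$, and absorbing the constants depending on $\sigma$ and $\norm{\vomega^L}_2$ into the $O(\cdot)$, gives exactly the claimed rate. I expect the main obstacle to be the Jacobian error bound, and specifically the coupling it introduces: because $\estimatedjacobian$ is built from the \emph{same} cloned parameters whose error I am already controlling, the plug-in and statistical contributions are not independent, so care is needed to separate the Lipschitz-in-$\vtheta$ propagation of the behavioral-cloning error from the \emph{genuine} sampling noise of the gradient estimator, and to verify that the Gaussian score, though unbounded, concentrates well enough for the $M$-uniform bound to apply. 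A secondary subtlety is justifying the replacement of $\sigma_{\min}(\estimatedjacobian)$ by $\sigma_{\min}(\jacobian)$ in the prefactor, which requires $n$ large enough that $\norm{E_J}_2 < \sigma_{\min}(\jacobian)$.
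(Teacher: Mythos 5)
Your proposal is correct and follows the same overall architecture as the paper's proof: a single-step ($m=1$, $\alpha=1$) perturbation argument whose two error sources---the behavioral-cloning error $\norm{\widehat\Delta-\Delta}_2$ and the Jacobian estimation error $\norm{\estimatedjacobian-\jacobian}_2$---are bounded exactly as in the paper's supporting lemmas: the cloning error via casting Gaussian-linear-policy MLE as ordinary least squares with fixed design (giving the $\sqrt{\log(1/\delta)/(n\eta)}$ rate), and the Jacobian error via the pathwise triangle-inequality split $\norm{\estimatedjacobian(\widehat\vtheta)-\jacobian(\vtheta)}_2 \le \norm{\estimatedjacobian(\vtheta)-\jacobian(\vtheta)}_2+\norm{\estimatedjacobian(\vtheta)-\estimatedjacobian(\widehat\vtheta)}_2$, with Hoeffding controlling the first term (order $M\sqrt{dq\log(2/\delta)/n}$) and the $\sigma^{-2}M_S^2$-Lipschitzness of the Gaussian score, times the reward bound, controlling the second (order $M_S^2M_R\norm{\widehat\vtheta-\vtheta}_2$); your worry about coupling is resolved the same way the paper resolves it, since this split needs no independence. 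The one step where you genuinely diverge is the perturbation analysis itself: the paper invokes Wedin's least-squares perturbation lemma (its Lemma~1.2), which yields a condition-number prefactor plus an extra term $\chi\norm{y}_2\norm{\jacobian}_2$ with $y=\jacobian^{+H}\vomega^L$, whereas you derive the bound from scratch through the identity $P\jacobian=I-PE_J$, obtaining the clean decomposition $\widehat\vomega-\vomega^L=P(e_\Delta-E_J\vomega^L)$ and then replacing $\sigma_{\min}(\estimatedjacobian)$ by $\sigma_{\min}(\jacobian)$ via Weyl. Your route is more elementary and self-contained, produces no residual or $y$ terms (the residual vanishes here anyway because $\Delta=\jacobian\vomega^L$ holds exactly), and makes explicit the sample-size condition $\norm{E_J}_2<\sigma_{\min}(\jacobian)$ that the paper leaves implicit in Wedin's hypotheses ($\chi\kappa<1$ and rank preservation); the paper's route buys generality (nonzero residual) at the cost of quoting an external lemma. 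Finally, your concern about the unboundedness of the Gaussian score is moot under the theorem's hypotheses: both you and the paper assume $\estimatedjacobian$ uniformly bounded by $M$, which is precisely what is fed into Hoeffding.
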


\begin{algorithm}[t]
\caption{\algname} \label{alg:irl-mi}
\begin{algorithmic}[1]
\REQUIRE Dataset $\mathcal{D} = \{ \mathcal{D}_1, \dots, \mathcal{D}_{m+1}\}$ with $\mathcal{D}_j = \{ (\traj_1,\dots,\traj_{n_j}) \ |\ \traj_i \sim \pi_{\vtheta_j}\}$ 
\ENSURE Reward weights ${\vomega} \in \reals^d$
\vspace{0.1cm}
\STATE{Estimate  policy parameters $(\hat\vtheta_1,\dots,\hat\vtheta_{m+1})$ with Eq.~\eqref{eq:bc}}
\STATE{Initialize $A$ and $\vomega$}
\STATE{Compute learning rates $A$ and reward weights $\vomega$ by alternating~\eqref{eq:alpha} and~\eqref{eq:omega} up to convergence}

\end{algorithmic}
\end{algorithm}
The theorem, that relies on perturbation analysis and on least squares with fixed design, underlines how \algname, with a sufficient number of samples to estimate the policy parameters and the gradients, succeeds in recovering the correct reward parameters.

\section{Related Works}
The problem of estimating the reward function of an agent who is learning is quite new. This setting was proposed by Jacq et Al.~\cite{lflpaper} and, to the best of our knowledge, it is studied only in that work. In~\cite{lflpaper} the authors proposed a method based on entropy-regularized reinforcement learning, in which they assumed that the learner is performing soft policy improvements. In order to derive their algorithm, the authors also assume that the learner respects the policy improvement condition. We do not make this assumption as our formulation assumes only that the learner is changing its policy parameters along the gradient direction (which can result in a performance loss). 

The problem, as we underlined in Section~\ref{S:IRL-LFL}, is close to the Inverse Reinforcement Learning problem~\cite{DBLP:journals/ftrob/OsaPNBA018, ng2000algorithms}, since they share the intention of acquiring the unknown reward function from the observation of an agent's demonstrations. \algname~relies on the assumption that the learner is improving her policy through gradient ascent updates. A similar approach, but in the expert case, was taken in~\cite{pirotta2013adaptive, metelli2017compatible} where the authors use the null gradient assumption to learn the reward from expert's demonstrations. 

In another line of works, policy sub-optimal demonstrations are used in the preference-based IRL~\cite{christiano2017deep, ibarz2018reward} and ranking-based IRL \cite{brown2019extrapolating, castro2019inverse}. Some of these works require that the algorithm asks a human to compare possible agent's trajectories in order to learn the underlying reward function of the task. We can imagine that \algname~can be used in a similar way to learn from humans who are learning a new task. 
Instead, in~\cite{balakrishna2019policy} was proposed an Imitation Learning setting where the observer tries to imitate the behavior of a supervisor that demonstrates a converging sequence of policies.

In works on the \textit{theory of minds}~\cite{rabinowitz2018machine, shum2019theory}, the authors propose an algorithm that uses meta-learning to build a system that learns how to model other agents. In these works it is not required that agents are experts but they must be stationary. Instead, in the setting considered by \algname, the observed agent is non-stationary. 

\section{Experiments}
\begin{figure}
    \centering
    \includegraphics[scale=.87]{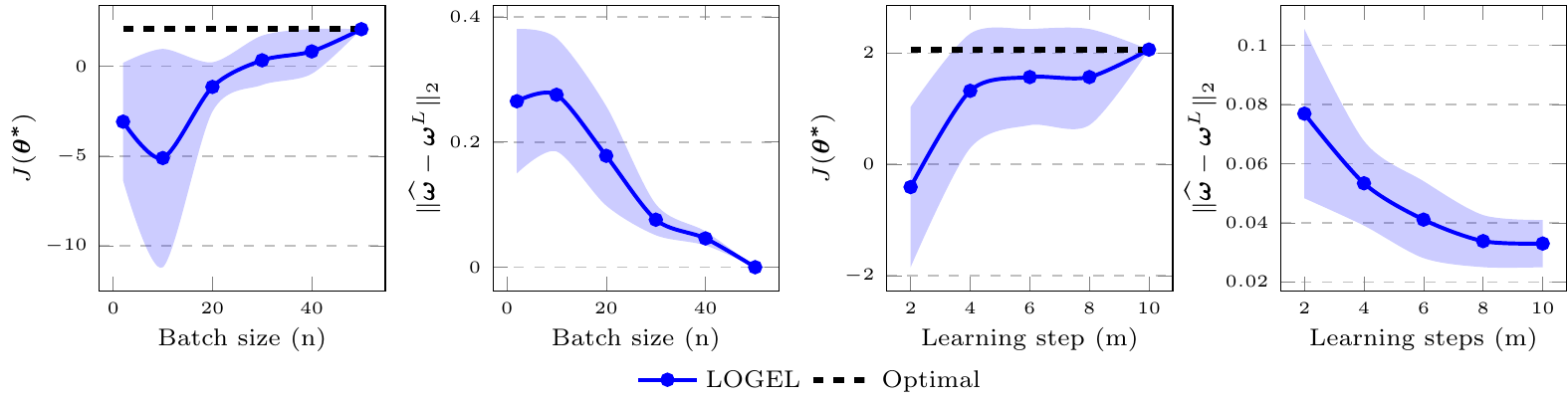}
    \caption{Gridworld experiment with known policy parameters. The learner is using G(PO)MDP algorithm. From left the expected discounted return and the norm difference between the real weights and the recovered ones with one learning step; the same measures with fixed batch size ($5$ trajectories with length $20$). The performance of the observers are evaluated on the learner’s reward weights. Results are averaged over 20 runs. $98\%$ c.i as shaded area.}
    \label{fig:exact}
\end{figure}

\label{sec:experiments}
This section is devoted to the experimental evaluation of \algname. The algorithm \algname~is compared to the state-of-the-art baseline Learner From a Learner (LfL)~ \cite{lflpaper} in a gridworld navigation task and in two MuJoCo environments. More details on the experiments are in Appendix \ref{app:experiments}.

\subsection{Gridworld}
\begin{wrapfigure}{R}{0.35\textwidth}
\centering
\includegraphics[scale=0.65]{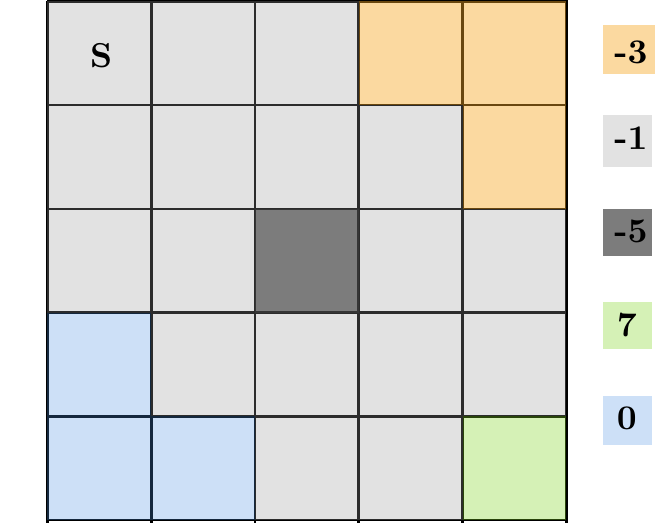}
\caption{Gridworld environment: every area has a different reward weight. In the green area the agent is reset to the starting state.}
\label{fig:grid}
\end{wrapfigure}
The first set of experiments aims at evaluating the performance of \algname~in a discrete Gridworld environment. The gridworld, represented in Figure~\ref{fig:grid}, is composed of five regions with a different reward for each area. The agent starts from the cell $(1,1)$ and when she reaches the green state, then returns to the starting state. The reward feature space is composed of the one-hot encoding of five areas: the orange, the light grey, the dark grey, the blue, and the green. The learner weights for the areas are $(-3,-1,-5,7,0)$ respectively. 
\begin{figure}
    \centering
    \includegraphics[scale=.9]{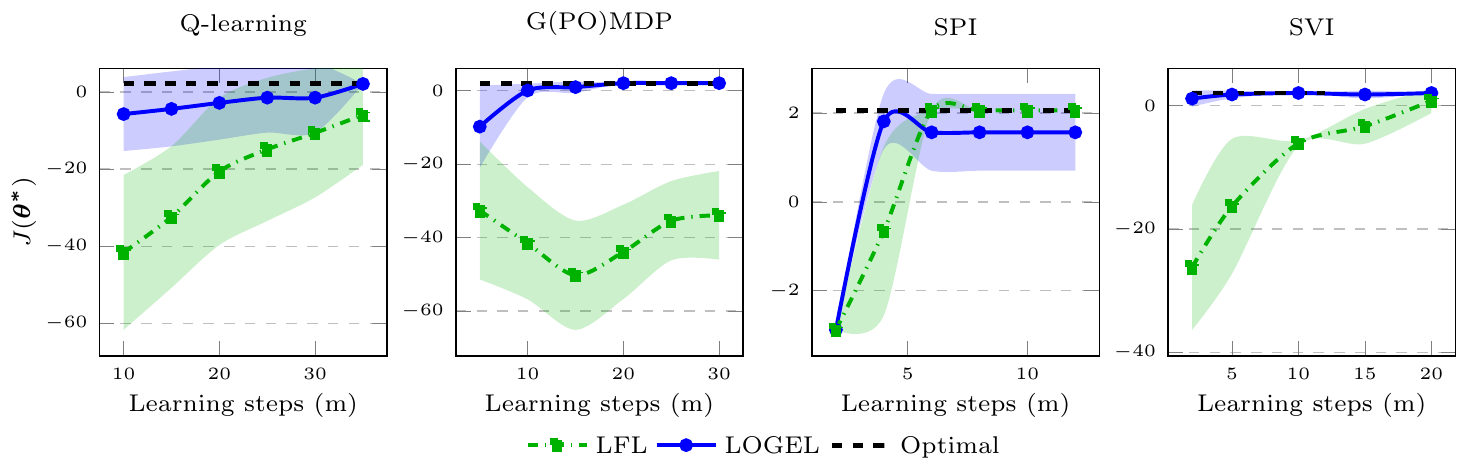}
    \caption{Gridworld experiment with estimated policy parameters and four learner: from left QLearning, G(PO)MDP, SPI, SVI. The green line is the LfL observer and the blue one is the \algname~observer. The performance of the observers are evaluated on the learner’s reward weights. Results are everaged over 20 runs. $98\%$ c.i as shaded area.} 
    \label{fig:comparison}
\end{figure}
As a first experiment, we want to verify in practice the theoretical finding exposed in Section~\ref{sec:exact}. In this experiment, the learner uses a Boltzmann policy and she is learning with the G(PO)MDP policy gradient algorithm. The observer has access to the true policy parameters of the learner. Figure~\ref{fig:exact} shows the performance of \algname~in two settings: a single learning step and increasing batch size ($5$,$10$,$20$,$30$,$40$,$50$); a fixed batch size (batch size $5$ and trajectory length $20$) and an increasing number of learning steps ($2$, $4$, $6$, $8$, $10$). The figure shows the expected discounted return (evaluated in closed form) and the difference in norm between the learner's weights and the recovered weights \footnote{To perform this comparison, we normalize the recovered weights and the learner's weights}. We note that, as explained in Theorem~\ref{finitesampletwo}, with a more accurate gradient estimate, the observer succeeds in recovering the reward weights by observing even just one learning step. On the other hand, as we can deduce from Theorem~\ref{finitesampletwo}, if we have a noisy estimation of the gradient, with multiple learning steps, the observer succeeds in recovering the learner's weights. It is interesting to notice that, from this experiment, it seems that the bias component, which does not vanish as the learning steps increase (see Theorem~\ref{finitesampletwo}), does not affect the correctness of the recovered weights.

In the second experiment we consider four different learners using: QLearning~\cite{sutton1998reinforcement}, G(PO)MDP, Soft policy improvement (SPI)~\cite{lflpaper} and Soft Value Iteration (SVI)~\cite{haarnoja2018soft} \footnote{In Appendix~\ref{app:experiments} the learning process of each learning agent is shown.}. For this experiment we compare the performance of \algname~and LfL~\cite{lflpaper}. In Figure~\ref{fig:comparison} we can notice as \algname~succeeds in recovering the learner's reward weights even with learner algorithms other than gradient-based ones.
Instead, LfL does not recover the reward weights of the G(PO)MDP learner and needs more learning steps than \algname~to learn the reward weights when Q-learning learner and SVI are observed.

\begin{wrapfigure}{R}{0.5\textwidth}
    \includegraphics[scale=0.88]{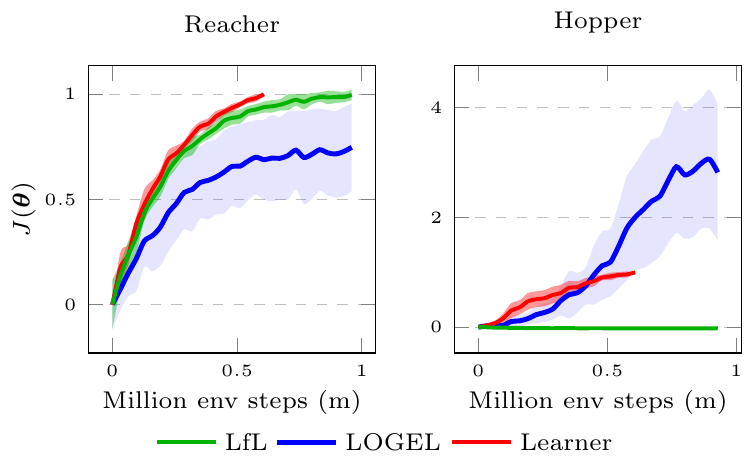}
    \caption{From the left, the Reacher and the Hopper MuJoCo environments. The red line is the performance of the learner during $20$ learning steps. The observers, LfL and \algname, observe the trajectories of the last $10$ learning steps. The performance of the observers are evaluated on the learner's reward weights. Scores are normalized setting to $0$ the first return of the learner and to $1$ the last one. The results are averaged over $10$ runs. $98\%$ c.i. are shown as shaded areas.}
    \label{fig:mujoco}
\end{wrapfigure}

\subsection{MuJoCo environments}

In the second set of experiments, we show the ability of \algname~to infer the reward weights in more complex and continuous environments. We use two environments from the MuJoCo control suite~\cite{DBLP:journals/corr/BrockmanCPSSTZ16}: Hopper and Reacher. As in~\cite{lflpaper}, the learner is trained using Policy Proximal Optimization (PPO)~\cite{schulman2017proximal}, with $16$ parallel agents for each learning step. For each step, the length of the trajectories is $2000$. Then we use \algname~or LfL to recover the reward parameters. In the end, the observer is trained with the recovered weights using PPO and the performances are evaluated on the learner's weights, starting from the same initial policy of the learner for a fair comparison. The scores are normalized by setting to $1$ the score of the last observed learner policy and to $0$ the score of the initial one. In both environments, the observer learns using the learning steps from $10$ to $20$ as the first learning steps are too noisy.
The reward function of LfL is the same as the one used in the original paper, where the reward function is a neural network equal to the one used for the learner's policy. Instead, for \algname~we used linear reward functions derived from state and action features. The reward for the Reacher environment is a $26$-grid radial basis function that describes the distance between the agent and the goal, plus the 2-norm squared of the action. In the Hopper environment, instead, the reward features are the distance between the previous and the current position and the 2-norm squared of the action.

The results are shown in Figure~\ref{fig:mujoco}, where we reported results averaged over $10$ runs. We can notice that \algname~succeeds in identifying a good reward function in both environments, although in the Reacher environment the recovered reward function causes slower learning. Instead, LfL fails to recover an effective reward function for the Hopper environment \cite{lflpaper}.

\section{Conclusions}
In this paper we propose a novel algorithm, \algname, for the \quotes{Learning from a Learner Inverse Reinforcement Learning} setting. The proposed method relies on the assumption that the learner updates her policy along the direction of the gradient of the expected discounted return. We provide some finite-sample bounds on the algorithm performance in recovering the reward weights when the observer observes the learner's policy parameters and when the observer observes only the learner's trajectories. Finally, we tested \algname~on a discrete gridworld environment and on two MuJoCo continuous environments, comparing the algorithm with the state-of-the-art baseline~\cite{lflpaper}. As future work, we plan to extend the work to account for the uncertainty in estimating both the policy parameters and the gradient.

\section*{Broader impact}
In this paper, we focus on the Inverse Reinforcement Learning \cite{abbeel2004apprenticeship, hussein2017imitation, argall2009survey, DBLP:journals/ftrob/OsaPNBA018} task from a Learning Agent \cite{lflpaper}.
The first motivation to study Inverse Reinforcement Learning algorithms is to overcome the difficulties that can arise in specifying the reward function from human and animal behavior. Sometimes, in fact, it is easier to infer human intentions by observing their behaviors than to design a reward function by hand. An example is helicopter flight control \cite{abbeel2007application}, in which we can observe a helicopter operator and through IRL a reward function is inferred to teach a physical remote-controlled helicopter. Another example is to predict the behavior of a real agent as route prediction tasks of taxis \cite{ziebart2008maximum, ziebart2008navigate} or anticipation of pedestrian interactions \cite{chung2010mobile} or energy-efficient driving \cite{vogel2012improving}. However, in many cases the agents are not really experts and on the other hand only expert demonstrations can not show their intention to avoid dangerous situations. We want to point out that learning what the agent wants to avoid because harmful is as important as learning his intentions. 

The possible outcomes of this research are the same as those of Inverse Reinforcement Learning mentioned above, avoiding the constraint that the agent has to be an expert. In future work, we will study how to apply the proposed algorithm in order to infer the pilot's intentions when they learn a new circuit.

A relevant  possible complication of using IRL is the error on the reward feature engineering which can lead to errors in understanding the agent's intentions. In application such as autonomous driving, errors in the reward function can cause dangerous situations. For this reason, the verification through simulated environment of the effectiveness of the retrieve rewards is quite important. 
\bibliographystyle{plain}
\bibliography{biblio.bib}
\newpage
\appendix
\section{Proofs and derivations}
\label{apx:proofs}
We start the proofs given some introduction on Pertubation on Least Square problems and on Least Square problems with fixed design. Then we report the proofs and derivations for the results of Sections~\ref{sec:exact} and~\ref{sec:learndem}. For the rest of this section we assume that:
\begin{itemize}
    \item $\vomega^L, \widehat{\vomega} \in \reals^q$,
    \item $\vtheta^L, \widehat{\vtheta} \in \reals^d$.
\end{itemize}
We define with $\vomega^L$ ($\vtheta^L$) the reward (policy) parameters of the learner, and with $\widehat{\vomega}$ ($\widehat{\vtheta}$) the reward (policy) parameters recovered by the observer.

\subsection{Preliminaries}
\begin{definition}
The condition number of a matrix $\mathbr{A} \in R^{m \times q}$ $\mathbr{A} \ne 0$ is:
\begin{equation*}
    \kappa = \norm{\mathbr{A}}_2 \norm{\mathbr{A}^{+}}_2 = \frac{\sigma_1}{\sigma_r},
\end{equation*}
where 0 < r = rank($\mathbr{A}$) $\le \min(m,q)$, and $\sigma_1 \ge \cdots \ge \sigma_r > 0$ are the nonzero singular values of $\mathbr{A}$.
\end{definition}

A least-squares problem is defined as:
\begin{equation}
\label{leastsquare}
    \min_{x} \norm{\mathbr{A}x - b}_2,
\end{equation}
where the solution is $x = \mathbr{A}^{+} b$. We denote with $\mathbr{A}^{+}$ the pseudoinverse of $\mathbr{A}$, the perturbed $\mathbr{A}$ as $\widehat{\mathbr{A}} = \mathbr{A} + \delta \mathbr{A}$ and the pertubed $\hat{b} = b + \delta b$ and the perturbed solution $\widehat{x} = \widehat{\mathbr{A}}^{+} \hat b = x + \delta x$. Finally, we denote with $\mathbr{A}^H$ the adjoint of the matrix $\mathbr{A}$.

We define as $\chi = \frac{\norm{\delta \mathbr{A}}_2}{\norm{\mathbr{A}}_2}$ and $y = \mathbr{A}^{+H}x$.
\begin{lemma}[Perturbation on Least Square Problems \cite{wedin1973perturbation}]
\label{lemma:perturbation}
Assume that rank($\mathbr{A}+\delta \mathbr{A}$) = rank($\mathbr{A}$) and $\chi \kappa < 1$ then:
\begin{equation}
    \norm{x-\hat{x}}_2 \le \frac{\kappa}{(1-\chi\kappa) \norm{\mathbr{A}}_2 } (\chi \norm{x}_2 \norm{\mathbr{A}}_2 + \chi \kappa \norm{r}_2 + \norm{\delta b}_2) + \chi \norm{y}_2 \norm{\mathbr{A}}_2.
\end{equation}
\end{lemma}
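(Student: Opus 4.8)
The plan is to recognize the single-step weight recovery as a perturbed least-squares problem and then control each source of perturbation by concentration. With $m=1$ and $\alpha=1$, Lemma~\ref{lemma:weightclosedformsum} gives $\widehat\vomega = \estimatedjacobian^{+}\widehat\Delta$, while the assumption that the learner follows the exact gradient means the true weights satisfy $\Delta = \jacobian\,\vomega^L$ exactly, so $\vomega^L = \jacobian^{+}\Delta$ with \emph{zero residual} $r = \Delta - \jacobian\vomega^L = \mathbr{0}$. First I would cast this in the notation of Lemma~\ref{lemma:perturbation} by setting $\mathbr{A} = \jacobian$, $b = \Delta$, $x = \vomega^L$, with perturbations $\delta\mathbr{A} = \estimatedjacobian - \jacobian$ and $\delta b = \widehat\Delta - \Delta$, so that $\widehat x = \widehat\vomega$. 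Because $r = \mathbr{0}$, the Wedin bound collapses (using $\kappa/\norm{\mathbr{A}}_2 = \norm{\mathbr{A}^{+}}_2$ and $\chi\norm{\mathbr{A}}_2 = \norm{\delta\mathbr{A}}_2$) to
$$\norm{\vomega^L - \widehat\vomega}_2 \lesssim \norm{\jacobian^{+}}_2\left(\norm{\delta\mathbr{A}}_2\,\norm{\vomega^L}_2 + \norm{\delta b}_2\right),$$
where $\norm{\jacobian^{+}}_2 = 1/\sigma_{\min}(\jacobian)$ is precisely the prefactor appearing in the statement.

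Second, I would bound the target perturbation $\norm{\delta b}_2 = \norm{\widehat\Delta - \Delta}_2 \le \norm{\widehat\vtheta_1 - \vtheta_1}_2 + \norm{\widehat\vtheta_2 - \vtheta_2}_2$, i.e. the behavioral-cloning error. This is where \emph{least squares with fixed design} enters: for the Gaussian linear policy $\pi_\vtheta(\cdot|s)\sim\mathcal{N}(\vtheta^T\vvarphi(s),\sigma)$ the objective in Eq.~\eqref{eq:bc} is exactly ordinary least squares of the observed actions on the state-feature design matrix $S$, so the MLE admits the closed form $\widehat\vtheta = (S^TS)^{-1}S^T a$ and the error decomposes cleanly as $\widehat\vtheta - \vtheta = (S^TS)^{-1}S^T\bm{\epsilon}$ with $\bm{\epsilon}\sim\mathcal{N}(\mathbr{0},\sigma^2\identity)$. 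I would then control $\norm{(S^TS)^{-1}S^T\bm{\epsilon}}_2$ with a Gaussian/sub-Gaussian tail bound on the norm of a linear image of the noise, using $\sigma_{\min}(S^TS)\ge\eta$ and the feature bound $M_S$, yielding a high-probability bound of order $M_S\sqrt{\log(2/\delta)/(n\eta)}$.

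Third, I would bound the operator-norm perturbation $\norm{\delta\mathbr{A}}_2 = \norm{\estimatedjacobian - \jacobian}_2$, the gradient-estimation error, by a concentration inequality on the unbiased policy-gradient Jacobian estimator, reusing the machinery behind Theorem~\ref{finitesampletwo}. Here the score function of the Gaussian policy contributes the state-feature factor $M_S$ and the reward basis contributes $M_R$, so the per-sample quantities are bounded by products of $M_S$ and $M_R$ and the estimate concentrates at rate $\sqrt{\log(2/\delta)/n}$. Collecting the prefactors of both perturbation terms into the uniform Jacobian bound $M$ and the per-sample product $M_S^2 M_R$ gives the numerator $(M + M_S^2 M_R)$ in the claimed bound. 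Finally I would substitute both high-probability bounds into the collapsed Wedin inequality and take a union bound over the two behavioral-cloning events and the gradient event so that the total failure probability is $\delta$.

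The main obstacle is two-sided. On the analytic side, I must honestly verify the hypothesis $\chi\kappa < 1$ of Lemma~\ref{lemma:perturbation}, i.e. that $\norm{\estimatedjacobian - \jacobian}_2/\sigma_{\min}(\jacobian) < 1$; this holds only once $n$ is large enough, so the concentration bound for $\delta\mathbr{A}$ must be established first and used to guarantee the regime in which the linear perturbation expansion (and hence the collapsed inequality) is valid. On the probabilistic side, the delicate step is the fixed-design OLS concentration: tracking how $M_S$, $\eta$, and $n$ combine in the norm of the Gaussian vector $(S^TS)^{-1}S^T\bm{\epsilon}$, and making sure the two behavioral-cloning errors and the gradient error combine under a single $1-\delta$ event without the constants degrading the stated rate.
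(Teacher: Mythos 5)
Your proposal does not prove the statement it was given --- it proves a different result \emph{using} it. The statement here is Wedin's perturbation bound for least squares: a purely deterministic linear-algebra fact about how the minimum-norm solution $x=\mathbr{A}^{+}b$ of $\min_x\norm{\mathbr{A}x-b}_2$ moves when $\mathbr{A}$ and $b$ are perturbed. Your sketch instead reconstructs the paper's finite-sample theorem for \algname\ in Section~\ref{sec:learndem} (the bound with numerator $M+M_S^2M_R$ and rate $\sqrt{\log(2/\delta)/(n\eta)}$), and its very first step invokes ``Lemma~\ref{lemma:perturbation}'' as a black box. As a proof of Lemma~\ref{lemma:perturbation} this is circular, and all of the probabilistic machinery you develop --- the behavioral-cloning OLS concentration, the Jacobian concentration, the union bound --- is simply irrelevant to the lemma, which involves no randomness, no policies, and no estimators, only $\mathbr{A}$, $\delta\mathbr{A}$, $b$, $\delta b$, the residual $r$, the condition number $\kappa$, and $y=\mathbr{A}^{+H}x$. (For reference, the paper does not re-derive it either: its ``proof'' is a citation to \cite{wedin1973perturbation}.)

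A genuine proof has to go through pseudoinverse perturbation theory, and the two hypotheses are exactly what enables it: rank preservation ($\mathrm{rank}(\mathbr{A}+\delta\mathbr{A})=\mathrm{rank}(\mathbr{A})$, an \emph{acute} perturbation) together with $\chi\kappa<1$ yields the key estimate $\norm{(\mathbr{A}+\delta\mathbr{A})^{+}}_2\le\norm{\mathbr{A}^{+}}_2/(1-\chi\kappa)=\kappa/\bigl((1-\chi\kappa)\norm{\mathbr{A}}_2\bigr)$, which is the source of the prefactor in the claimed inequality (without the rank condition, $\norm{\widehat{\mathbr{A}}^{+}}_2$ can blow up and the bound is false). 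One then writes $\hat{x}-x=\widehat{\mathbr{A}}^{+}(\delta b-\delta\mathbr{A}\,x)+(\widehat{\mathbr{A}}^{+}-\mathbr{A}^{+})$-type correction terms via the standard three-term expansion of the pseudoinverse difference: a term carrying the residual $r=b-\mathbr{A}x$ through $\widehat{\mathbr{A}}^{+}(\mathbr{A}^{+})^{H}\delta\mathbr{A}^{H}$, which produces the $\chi\kappa\norm{r}_2$ contribution, and a null-space term of the form $(\identity-\widehat{\mathbr{A}}^{+}\widehat{\mathbr{A}})\,\delta\mathbr{A}^{H}y$, which produces $\chi\norm{y}_2\norm{\mathbr{A}}_2$ --- the part that vanishes when $\mathbr{A}$ has full column rank, as the paper notes elsewhere. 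None of your three steps touches this decomposition; even your closing concern about verifying $\chi\kappa<1$ for $\estimatedjacobian$ is a concern for the \emph{application} of the lemma, not for the lemma itself. Your outline would serve as a compressed sketch of the theorem in Section~\ref{sec:learndem}, but as an argument for Lemma~\ref{lemma:perturbation} it supplies no proof at all.
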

\begin{proof}
The proof can be find in \cite{wedin1973perturbation}.
\end{proof}

We adapt the lemma 6 in \cite{chen2013noisy} to our context where $\widehat{\vomega}$ are the reward weights recovered with lemma \ref{lemma:weightclosedformsum}.
\begin{lemma}[From lemma 6 in \cite{chen2013noisy}]
\label{boundold}
Let $\Sigma = (\estimatedjacobian^T \estimatedjacobian)$ and suppose the following strong convexity condition holds: $\lambda_{\min}(\Sigma) \ge \lambda > 0$. Then the estimation error satisfies:
\begin{equation*}
    \norm{\widehat{\vomega} - \vomega^L}_2 \le O\left(\frac{1}{\lambda} \norm{\jacobian^T \Delta - \Sigma \vomega^L}_2\right).
\end{equation*}
\end{lemma}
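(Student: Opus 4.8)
The plan is to exploit the fact that the recovered weights $\widehat{\vomega}$ are not an arbitrary vector but the exact minimizer of the least-squares problem~\eqref{min:sumomegas} built from the estimated Jacobians, so they satisfy the associated normal equations. Concretely, fixing the learning rates to one and stacking the estimated per-step Jacobians into $\widehat{\grads}$, Lemma~\ref{lemma:weightclosedformsum} gives the closed form $\widehat{\vomega} = \Sigma^{-1}\estimatedjacobian^T\Delta$, which is equivalent to the identity $\Sigma\,\widehat{\vomega} = \estimatedjacobian^T\Delta$. This normal-equation identity is the only structural property of $\widehat{\vomega}$ the argument needs, and it mirrors the route taken in Lemma 6 of \cite{chen2013noisy}.

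From there the derivation is purely algebraic. First I would subtract $\Sigma\vomega^L$ from both sides of the normal equations to obtain $\Sigma(\widehat{\vomega} - \vomega^L) = \estimatedjacobian^T\Delta - \Sigma\vomega^L$, whose right-hand side is exactly the residual vector appearing in the statement. Since the strong-convexity hypothesis $\lambda_{\min}(\Sigma)\ge\lambda>0$ guarantees that $\Sigma$ is invertible, I can left-multiply by $\Sigma^{-1}$ to get $\widehat{\vomega}-\vomega^L = \Sigma^{-1}(\estimatedjacobian^T\Delta - \Sigma\vomega^L)$. Taking Euclidean norms and using submultiplicativity of the spectral norm yields $\norm{\widehat{\vomega}-\vomega^L}_2 \le \norm{\Sigma^{-1}}_2\,\norm{\estimatedjacobian^T\Delta - \Sigma\vomega^L}_2$.

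The last ingredient is to control $\norm{\Sigma^{-1}}_2$. Because $\Sigma = \widehat{\grads}^T\widehat{\grads}$ is a Gram matrix it is symmetric positive semidefinite, so its spectral norm coincides with its largest eigenvalue and $\norm{\Sigma^{-1}}_2 = \lambda_{\min}(\Sigma)^{-1} \le 1/\lambda$ under the hypothesis; substituting produces the claimed bound with the constant hidden in $O(\cdot)$ equal to one. There is no genuine analytical obstacle here, since the statement is deterministic and its entire content is the normal-equation identity plus one eigenvalue bound for a symmetric matrix. The only point deserving a word of care is to verify that $\Sigma$ is indeed symmetric positive definite — which is exactly the full-rank / strong-convexity condition already used to make $\widehat{\vomega}$ well defined in Lemma~\ref{lemma:weightclosedformsum} — so that both the inversion of $\Sigma$ and the evaluation $\norm{\Sigma^{-1}}_2 = 1/\lambda_{\min}(\Sigma)$ are legitimate.
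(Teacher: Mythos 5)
Your proof is correct, and it is the right argument: the paper itself gives no proof of this lemma, deferring entirely to Lemma~6 of the cited reference, and your normal-equations derivation is exactly the mechanism underlying that result, specialized to the unconstrained least-squares setting (in the reference the estimator may be constrained, so the argument there goes through first-order optimality conditions and picks up a constant factor, whereas your unconstrained version gives the bound with constant one). One point worth flagging explicitly: your derivation produces the residual $\norm{\estimatedjacobian^T \Delta - \Sigma \vomega^L}_2$, with the \emph{estimated} Jacobian, while the paper's statement prints $\norm{\jacobian^T \Delta - \Sigma \vomega^L}_2$ with the unhatted one. Since $\Sigma = \estimatedjacobian^T\estimatedjacobian$ and the normal equations read $\Sigma\,\widehat{\vomega} = \estimatedjacobian^T\Delta$, your hatted version is the one that actually follows, and it is also the form in which the lemma is applied in the proof of Theorem~\ref{finitesampletwo} (there the residual is $(\grads+E)^T\Delta - (\grads+E)^T(\grads+E)\vomega^L$); the missing hat in the statement is a typo that your argument implicitly corrects. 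Were the unhatted statement intended literally, you would need an extra triangle-inequality term of order $\norm{(\estimatedjacobian-\jacobian)^T\Delta}_2/\lambda$, but nothing downstream requires that reading.
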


\begin{lemma}[Revised from lemma 11 in \cite{mcwilliams2014fast}]
\label{subgaus}
Suppose $X \in \reals^{m \times q}$ and $W \in \reals^{n \times M}$ are zero-mean sub-gaussian matrices with parameters $(\frac{1}{n} \Sigma_x, \frac{1}{n} \sigma^2_x), (\frac{1}{n} \Sigma_w, \frac{1}{n} \sigma^2_w)$ respectively. Then for any fixed vectors $v_1, v_2$, we have:
\begin{equation*}
    \text{P}[|v_1^T (W^T X - \EX[W^TW]) v_2 | \ge t \norm{v_1}_2 \norm{v_2}_2] \le 3 exp \left(-cn \min \left\{ \frac{t^2}{\sigma_x^2 \sigma_w^2},  \frac{t}{\sigma_x\sigma_w} \right\}\right), 
\end{equation*}
in particular if $n \gtrsim \log p$ we have that:
\begin{equation*}
    |v_1^T (W^T X - \EX[W^TW]) v_2 | \le \sigma_x \sigma_w \norm{v_1}_2 \norm{v_2}_2 \sqrt{\frac{\log p}{n}}.
\end{equation*}
Setting $v_1$ to be the first standard basis vector and using a union bound over $j = 1, \cdots, p$ we have:
\begin{equation*}
    \norm{(W^T X - \EX[W^TX])v}_\infty \le \sigma_x \sigma_w \norm{v}_2 \sqrt{\frac{\log p}{n}},
\end{equation*}
with probability $1 - c_1 exp(-c_2 \log p)$ where $c_1, c_2$ are positive constants which are independent from $\sigma_x, \sigma_w, n, p$.
\end{lemma}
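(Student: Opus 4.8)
The plan is to reduce the bilinear form $v_1^\top(W^\top X-\EX[W^\top X])v_2$ to a sum of independent scalar products over the rows of the two matrices, and then apply a Bernstein-type tail bound for sums of independent sub-exponential variables; the $\min\{t^2,\,t\}$ structure inside the exponent is precisely the signature of that inequality. (I read the $\EX[W^\top W]$ in the first two displays as a typo for $\EX[W^\top X]$, as the third display confirms, and proceed with the latter.) Concretely, I would write $W$ and $X$ row-wise as $w_1^\top,\dots,w_n^\top$ and $x_1^\top,\dots,x_n^\top$, where the pairs $(w_i,x_i)$ are independent across $i$ (the meaning of ``sub-gaussian matrix'' in this framework), each mean-zero, with $w_i$ sub-gaussian of parameter $\tfrac1n\sigma_w^2$ and $x_i$ of parameter $\tfrac1n\sigma_x^2$. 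Since $W^\top X=\sum_i w_i x_i^\top$, the quantity of interest equals $\sum_{i=1}^n(a_ib_i-\EX[a_ib_i])$ with $a_i:=w_i^\top v_1$ and $b_i:=x_i^\top v_2$, a sum of $n$ independent mean-zero terms, which is exactly the input Bernstein requires.

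Next I would control each summand. Projecting a sub-gaussian vector onto a fixed direction keeps it sub-gaussian, with the parameter scaled by the squared norm of the direction, so $a_i$ is sub-gaussian with parameter $\tfrac1n\sigma_w^2\norm{v_1}_2^2$ and $b_i$ with parameter $\tfrac1n\sigma_x^2\norm{v_2}_2^2$. The centered product $a_ib_i-\EX[a_ib_i]$ is then sub-exponential with parameter of order $\lambda:=\tfrac1n\sigma_w\sigma_x\norm{v_1}_2\norm{v_2}_2$. The delicate point is that $a_i$ and $b_i$ need not be independent (when $W,X$ are correlated within a row), so this step must not use independence of the two factors; it follows instead from $|a_ib_i|\le\tfrac12(a_i^2+b_i^2)$ together with the sub-exponentiality of the square of a sub-gaussian variable (or from a direct moment/MGF estimate), and I would track constants so the resulting sub-exponential scale is exactly $\lambda$.

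Finally I would invoke Bernstein's inequality for independent sub-exponentials: the $n$ terms give variance proxy $n\lambda^2=\tfrac1n\sigma_w^2\sigma_x^2\norm{v_1}_2^2\norm{v_2}_2^2$ and scale proxy $\lambda$, so with threshold $s=t\norm{v_1}_2\norm{v_2}_2$ the two branches evaluate to $s^2/(n\lambda^2)=n t^2/(\sigma_x^2\sigma_w^2)$ and $s/\lambda=n t/(\sigma_x\sigma_w)$, which yields the first display (the factor $3$ rather than $2$ absorbing the slack from the product step). For the ``in particular'' bound I would set $t=\sigma_x\sigma_w\sqrt{\log p/n}$ and check that $n\gtrsim\log p$ forces the quadratic branch of the minimum to be active, making the exponent $-c\log p$. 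The $\ell_\infty$ bound then follows by taking $v_1$ equal to the $j$-th standard basis vector, so that $v_1^\top(W^\top X-\EX[W^\top X])v$ is the $j$-th coordinate of $(W^\top X-\EX[W^\top X])v$, applying the scalar bound for each $j=1,\dots,p$, and union-bounding over the $p$ coordinates, with the extra factor $p$ absorbed into the exponent under $n\gtrsim\log p$.

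I expect the product step to be the main obstacle: pinning down the sub-exponential parameter of the centered product with the correct explicit dependence on $\norm{v_1}_2,\norm{v_2}_2$ and the $\tfrac1n$ scaling, and doing so \emph{without} assuming the two factors are independent, so that Bernstein reproduces exactly the stated rate $\min\{t^2/(\sigma_x^2\sigma_w^2),\,t/(\sigma_x\sigma_w)\}$ with the factor $n$ in front. The remaining steps (projection of sub-gaussian rows, the choice of $t$, and the union bound) are routine once this is in place.
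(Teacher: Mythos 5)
Your proposal is correct, but note first that the paper itself contains no proof of this lemma: it is imported verbatim (typos included) with a citation to Lemma~11 of \cite{mcwilliams2014fast}, whose own proof traces back to Loh and Wainwright's lemma on bilinear forms of sub-gaussian matrices. So the comparison is against that source argument rather than anything in the appendix. Your reconstruction is essentially that standard argument: you correctly read $\EX[W^TW]$ as a typo for $\EX[W^TX]$ (and implicitly fix the inconsistent row dimensions, with $p$ playing the role of the ambient dimension, $dq$ in the paper's application), decompose $v_1^\top(W^\top X-\EX[W^\top X])v_2=\sum_{i=1}^n\bigl(a_ib_i-\EX[a_ib_i]\bigr)$ over independent rows, and apply sub-exponential Bernstein. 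Your arithmetic checks out: with the $\tfrac1n$-scaled parameters, the sub-exponential scale is $\lambda=\tfrac1n\sigma_w\sigma_x\norm{v_1}_2\norm{v_2}_2$, giving exponent $-cn\min\{t^2/(\sigma_x^2\sigma_w^2),\,t/(\sigma_x\sigma_w)\}$, and $n\gtrsim\log p$ does force the quadratic branch for $t\asymp\sigma_x\sigma_w\sqrt{\log p/n}$, after which the union bound over coordinates is routine. The one step you flag as delicate — the sub-exponential bound for the centered product without independence of $a_i$ and $b_i$ — is handled exactly as you suggest, either via $|a_ib_i|\le\tfrac12(a_i^2+b_i^2)$ or, more cleanly, via the Orlicz-norm H\"older inequality $\norm{a_ib_i}_{\psi_1}\le\norm{a_i}_{\psi_2}\norm{b_i}_{\psi_2}$; the source instead uses a polarization into squares of sums and differences, which is why it carries the constant $3$ where your direct route gives $2$ — an immaterial difference you correctly anticipate. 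The only caveat worth stating explicitly is that the high-probability bounds in the second and third displays hold up to an absolute constant in front of $\sigma_x\sigma_w\sqrt{\log p/n}$, a sloppiness inherited from the statement itself, not a defect of your argument.
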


\begin{theorem}[from Chapter 2 \cite{fixeddesign}]
\label{fixeddesign}
Assume that the least-squares model:
\begin{equation*}
    \min_{x} \norm{\mathbr{A}x - b + \epsilon}
\end{equation*}
holds where $\epsilon \sim \text{subGn}(\sigma^2)$. Then, for any $\delta > 0$, with probability $1-\delta$ it holds:
\begin{equation*}
    \norm{x - \hat{x}}_2 \le \sigma \sqrt{\frac{r + \log(\frac{1}{\delta})}{n \sigma_{\min}}},
\end{equation*}
where $\sigma_{\min} = \frac{\mathbr{A}^T\mathbr{A}}{n}$ is the minimum singular value of $\mathbr{A}^T\mathbr{A}$ and $r$ is the rank($\mathbr{A}^T\mathbr{A}$).
\end{theorem}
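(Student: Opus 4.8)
The plan is to reduce the estimation error $\norm{x-\hat{x}}_2$ to a quadratic form in the sub-Gaussian noise $\epsilon$ and then invoke a tail bound for such quadratic forms. \textbf{Step 1 (normal equations).} Reading the model as observations perturbed by $\epsilon$ around the noiseless response $b=\mathbr{A}x$, the minimizer $\hat{x}$ of the least-squares objective satisfies $\mathbr{A}^T\mathbr{A}\,\hat{x}=\mathbr{A}^T(b\pm\epsilon)$, so that $\mathbr{A}^T\mathbr{A}\,(\hat{x}-x)=\pm\mathbr{A}^T\epsilon$. Taking the minimum-norm solution (which lives in the range of $\mathbr{A}^T$) gives the closed form $\hat{x}-x=\pm\mathbr{A}^{+}\epsilon$, where $\mathbr{A}^{+}$ is the pseudoinverse introduced in the preliminaries.

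\textbf{Step 2 (diagonalize via SVD).} Writing the singular value decomposition $\mathbr{A}=U\Sigma V^T$ with nonzero singular values $s_1\ge\cdots\ge s_r>0$ and $r=\mathrm{rank}(\mathbr{A}^T\mathbr{A})$, a direct computation gives $\mathbr{A}^{+}=V\Sigma^{-1}U^T$ and hence
$$\norm{\hat{x}-x}_2^2=\norm{\mathbr{A}^{+}\epsilon}_2^2=\sum_{i=1}^{r}\frac{(U^T\epsilon)_i^2}{s_i^2}\le\frac{1}{s_r^2}\,\norm{U^T\epsilon}_2^2.$$
Since $s_r^2=\lambda_{\min}(\mathbr{A}^T\mathbr{A})=n\,\sigma_{\min}$, it remains only to control the scalar $\norm{U^T\epsilon}_2^2$, the squared norm of the noise projected onto the $r$-dimensional column space of $\mathbr{A}$.

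\textbf{Step 3 (concentration).} Because $U$ has orthonormal columns, $U^T\epsilon$ is an $r$-dimensional sub-Gaussian vector with the same parameter $\sigma^2$, so $\norm{U^T\epsilon}_2^2$ is sub-exponential with mean of order $r\sigma^2$. A Hanson--Wright / Bernstein-type tail inequality (specializing to a standard chi-squared tail in the Gaussian case) yields that, for every $\delta>0$, with probability at least $1-\delta$,
$$\norm{U^T\epsilon}_2^2\le C\,\sigma^2\big(r+\log(\tfrac{1}{\delta})\big)$$
for an absolute constant $C$. Substituting into the bound of Step 2 and taking square roots recovers $\norm{x-\hat{x}}_2\le\sigma\sqrt{(r+\log(1/\delta))/(n\,\sigma_{\min})}$ up to the absolute constant, as claimed.

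\textbf{Main obstacle.} The only nontrivial ingredient is the concentration step: obtaining the clean \emph{additive} $r+\log(1/\delta)$ dependence (rather than a multiplicative factor) requires the sub-exponential deviation bound for quadratic forms in sub-Gaussian vectors, and one must check that the orthogonal projection by $U$ preserves the sub-Gaussian parameter and that it is precisely the smallest \emph{nonzero} singular value $s_r$ that enters the denominator. The remaining steps are routine linear algebra through the SVD; indeed, the statement is exactly the standard fixed-design least-squares analysis of \cite{fixeddesign}, from which it follows directly.
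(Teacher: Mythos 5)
Your proposal is correct, and there is nothing in the paper to compare it against line by line: the paper does not prove this statement, but imports it verbatim (somewhat loosely transcribed) from Chapter 2 of the cited reference, where it is the standard fixed-design least-squares bound. Your derivation is essentially that standard argument, with one mild stylistic difference: the reference proves the prediction-error bound $\norm{\mathbr{A}(\hat{x}-x)}_2 \le C\sigma\sqrt{r+\log(1/\delta)}$ via the basic inequality $\norm{\mathbr{A}(\hat{x}-x)}_2^2 \le 2\epsilon^T \mathbr{A}(\hat{x}-x)$ followed by a supremum of the sub-Gaussian noise over the unit sphere of the $r$-dimensional column space (covering argument), and then divides by $\lambda_{\min}(\mathbr{A}^T\mathbr{A})$; you instead use the closed form $\hat{x}-x=\mathbr{A}^{+}\epsilon$ and concentrate the quadratic form $\norm{U^T\epsilon}_2^2$ directly. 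The two routes are equivalent in content — both reduce to concentration of the noise projected onto an $r$-dimensional subspace — and yours is slightly more direct given that the least-squares solution is available in closed form. Two caveats you partially flag and should state explicitly: (i) the identity $\hat{x}-x=\mathbr{A}^{+}\epsilon$ requires $x$ to lie in the row space of $\mathbr{A}$ when $\mathbr{A}$ is rank-deficient (otherwise only the projection of $x$ is identifiable and an unremovable bias term $(\mathbr{A}^{+}\mathbr{A}-\identity)x$ appears), so the theorem should be read with $x$ the minimum-norm true parameter; and (ii) the bound holds only up to an absolute constant, which the paper's statement (and its sloppy normalization ``$\sigma_{\min}=\frac{\mathbr{A}^T\mathbr{A}}{n}$'', meaning the smallest nonzero singular value of $\mathbr{A}^T\mathbr{A}/n$) silently suppresses — harmless here, since the theorem is only invoked inside $O(\cdot)$ bounds downstream.
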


\subsection{Proofs and derivation of Section~\ref{sec:exact}}
In this section we give the proofs and derivations of the theorems in Section~\ref{sec:exact}.

First, we will provide a finite sample analysis on the difference in norm between the reward vector of the learner $\vomega^L$ and the reward vector recoverd using~\eqref{eq:omega}, with a single learning step. This result was omitted in the main paper as we can see this as a special case of Theorem~\ref{finitesampletwo}, but with a different technique. We add it here as it provides a first insight on how, having enough demonstrations, we can recover the correct weights. In the demonstration, without loss of generality, we assume that the learning rate is $1$.
\begin{lemma}
\label{lemma:boundgradient}
Let $\jacobian$ be the real Jacobian and $\estimatedjacobian$ the estimated Jacobian from $n$ trajectories $\{\tau_1, \cdots, \tau_n\}$. Assume that $\widehat{\nabla_{\vtheta} \psi}$ is uniformly bounded by $M$. Then with probability $1- \delta$
\begin{equation*}
    \norm{\widehat{\nabla_{\vtheta} \psi} -\nabla_{\vtheta} \psi}_2 \le M \sqrt{qd} \sqrt{\frac{\log(\frac{2}{\delta})}{2n}}.
\end{equation*}
\end{lemma}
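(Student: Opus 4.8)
The plan is to exploit the fact that the estimated Jacobian is an empirical average of $n$ independent, unbiased per-trajectory estimates, control its fluctuation around the true Jacobian entrywise via Hoeffding's inequality, and then lift this entrywise control to a matrix-norm bound through the Frobenius norm. Concretely, I would write $\estimatedjacobian = \frac{1}{n}\sum_{k=1}^{n} \mathbr{G}_k$, where $\mathbr{G}_k \in \reals^{d \times q}$ is the (REINFORCE or G(PO)MDP) gradient estimate built from the single trajectory $\tau_k$. Since these estimators are unbiased, $\EX[\mathbr{G}_k] = \jacobian$, and by independence of the trajectories $\EX[\estimatedjacobian] = \jacobian$. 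The first reduction is the standard inequality $\norm{\estimatedjacobian - \jacobian}_2 \le \norm{\estimatedjacobian - \jacobian}_F$, so it suffices to bound the Frobenius norm of the deviation.

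For a fixed pair of indices $(i,j)$, the entry $[\estimatedjacobian]_{ij} = \frac{1}{n}\sum_{k=1}^{n} [\mathbr{G}_k]_{ij}$ is an average of $n$ i.i.d.\ random variables, each bounded in absolute value by $M$ (the uniform bound on $\estimatedjacobian$), with common mean $[\jacobian]_{ij}$. Hoeffding's inequality then gives, for each entry,
\begin{equation*}
    \left| [\estimatedjacobian]_{ij} - [\jacobian]_{ij} \right| \le M \sqrt{\frac{\log(\frac{2}{\delta})}{2n}}
\end{equation*}
with probability at least $1-\delta$. Squaring this bound, summing over the $dq$ entries, and taking the square root yields $\norm{\estimatedjacobian - \jacobian}_F \le \sqrt{dq}\, M \sqrt{\log(\frac{2}{\delta})/(2n)}$, which is exactly the claimed estimate after the Frobenius-to-spectral step above. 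The $\sqrt{dq}$ factor is thus produced purely by the entrywise-to-Frobenius conversion.

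The step that requires care — and the one I expect to be the main obstacle — is making all $dq$ entrywise events hold simultaneously. A literal union bound over the $dq$ entries would replace $\log(\frac{2}{\delta})$ by $\log(\frac{2dq}{\delta})$; the statement as written keeps $\log(\frac{2}{\delta})$, so the dimension dependence is carried entirely by the explicit $\sqrt{dq}$ prefactor while the union-bound logarithmic correction is absorbed as lower order. The only other assumption that must be invoked explicitly is the unbiasedness of the per-trajectory gradient estimator: it is what forces $\EX[\estimatedjacobian] = \jacobian$ and hence centers the Hoeffding bound at the true Jacobian. With a biased estimator an irreducible bias term would survive, consistent with the remark following Theorem~\ref{finitesampletwo}.
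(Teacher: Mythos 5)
Your proposal is correct in substance and takes essentially the same route as the paper's own proof: an entrywise Hoeffding bound lifted to the spectral norm through the Frobenius norm, with the $\sqrt{dq}$ factor produced purely by the norm conversion. The union-bound subtlety you flag (which would strictly turn $\log(\frac{2}{\delta})$ into $\log(\frac{2dq}{\delta})$) is silently glossed over in the paper's proof as well, which bounds $\mathrm{P}\left[\sqrt{qd}\,\norm{\estimatedjacobian - \jacobian}_{\infty} \ge t\right]$ by a single two-sided Hoeffding term without the $dq$ multiplicity factor, so your treatment is, if anything, more explicit about that point.
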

\begin{proof}
We use Hoeffding's inequality:
\begin{align*}
    \text{P}\left[\norm{\widehat{\nabla_{\vtheta} \psi} -\nabla_{\vtheta} \psi}_2 \ge t\right] 
   \le  \text{P}\left[\sqrt{qd}\norm{\widehat{\nabla_{\vtheta} \psi} -\nabla_{\vtheta} \psi}_{\infty} \ge t\right]
    \le 2 \exp \left(\frac{-2t^2n}{dqM^2} \right)
\end{align*}
The result follows by setting $\delta =  2 \exp \left(\frac{-2t^2n}{dqM^2} \right)$.
\end{proof}

\begin{restatable}[]{thm}{finitesampleone}
\label{}
Let $\nabla_{\vtheta} \psi$ be the real Jacobian and $\widehat{\nabla_{\vtheta} \psi}$ the estimated Jacobian from $n$ trajectories $\{\tau_1, \cdots, \tau_n\}$. Assume that $\widehat{\nabla_{\vtheta} \psi}$ is uniformly bounded by $M$, $rank(\estimatedjacobian)=rank(\jacobian)$ and $\norm{\estimatedjacobian - \jacobian}_2 \cdot\kappa_{\jacobian} < \norm{\jacobian}_2$. Then with probability $1 - \delta$:
\begin{equation}
    \norm{\vomega^L - \hat{\vomega}}_2 \le M \sqrt{qd} \sqrt{\frac{\log(\frac{2}{\delta})}{2n}} \left (\frac{\kappa_{\jacobian} \norm{\vomega^L}_2 }{c \norm{\jacobian}_2} + \norm{y}_2 \right),
\end{equation}
where $\omega^L$ are the real reward parameters and $\hat{\omega}$ are the parameters recovered with Equation~\eqref{eq:omega}, $c = 1-\frac{\norm{\estimatedjacobian - \jacobian}_2}{\norm{\jacobian}_2} \kappa_{\jacobian} > 0$, and $y = \jacobian^{+H}\vomega$. 
\end{restatable}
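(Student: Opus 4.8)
The plan is to recognize the single-step problem as a \emph{noiseless} least-squares system that is perturbed only through its design matrix, and then to invoke the Wedin perturbation bound of Lemma~\ref{lemma:perturbation}. With $m=1$ and learning rate $\alpha = 1$, the update rule~\eqref{eq:updating} reads exactly $\Delta = \jacobian\,\vomega^L$, so in the notation of the least-squares problem~\eqref{leastsquare} I would set $\mathbr{A} = \jacobian$, $b = \Delta$, and $x = \vomega^L$. Because $\jacobian$ has full column rank (the invertibility hypothesis behind Eq.~\eqref{eq:omega}, so that $\jacobian^{+}\jacobian = \identity$), the learner's weights $\vomega^L = \jacobian^{+}\Delta$ are \emph{exactly} the least-squares solution, and hence the residual $r = b - \mathbr{A}x = \Delta - \jacobian\,\vomega^L$ vanishes. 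Crucially, in this section the observer has exact access to the policy parameters, so the right-hand side $\Delta$ is not perturbed and $\delta b = 0$; the only perturbation is $\delta\mathbr{A} = \estimatedjacobian - \jacobian$, which produces the perturbed solution $\hat{x} = \estimatedjacobian^{+}\Delta = \widehat{\vomega}$ recovered by Eq.~\eqref{eq:omega}.

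Next I would verify the two hypotheses of Lemma~\ref{lemma:perturbation}. Rank preservation, $\text{rank}(\estimatedjacobian) = \text{rank}(\jacobian)$, is assumed directly. The condition $\chi\kappa < 1$, with $\chi = \norm{\estimatedjacobian - \jacobian}_2 / \norm{\jacobian}_2$ and $\kappa = \kappa_{\jacobian}$, is exactly the stated assumption $\norm{\estimatedjacobian - \jacobian}_2\,\kappa_{\jacobian} < \norm{\jacobian}_2$, which also guarantees $c = 1 - \chi\kappa > 0$. Applying the lemma and substituting $r = 0$ and $\delta b = 0$ collapses the general four-term bound to
\begin{equation*}
\norm{\vomega^L - \widehat{\vomega}}_2 \le \frac{\chi\kappa\,\norm{\vomega^L}_2}{1 - \chi\kappa} + \chi\,\norm{y}_2\,\norm{\jacobian}_2,
\end{equation*}
where $y = \jacobian^{+H}\vomega^L$. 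Replacing $\chi$ by its definition and factoring out $\norm{\estimatedjacobian - \jacobian}_2$ then gives the deterministic bound
\begin{equation*}
\norm{\vomega^L - \widehat{\vomega}}_2 \le \norm{\estimatedjacobian - \jacobian}_2\left(\frac{\kappa_{\jacobian}\,\norm{\vomega^L}_2}{c\,\norm{\jacobian}_2} + \norm{y}_2\right).
\end{equation*}

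Finally I would close the argument statistically: Lemma~\ref{lemma:boundgradient} controls the spectral perturbation of the estimated Jacobian through Hoeffding's inequality, yielding $\norm{\estimatedjacobian - \jacobian}_2 \le M\sqrt{qd}\,\sqrt{\log(2/\delta)/(2n)}$ with probability $1-\delta$. Substituting this into the deterministic bound above reproduces the claimed inequality verbatim. The main obstacle is conceptual rather than computational: one must recognize that observing a \emph{single} learning step with exact policy parameters forces both the residual $r$ and the right-hand-side perturbation $\delta b$ to vanish, which is precisely what reduces the otherwise cumbersome Wedin bound to the clean two-term form in the statement. The one point to handle with care is justifying $r = 0$, which rests on $\jacobian$ having full column rank (inherited from the invertibility assumption of Lemma~\ref{lemma:weightclosedformsum}); this is what makes $\Delta$ lie in the column space of $\jacobian$ and thus recovers $\vomega^L$ exactly as the unperturbed least-squares solution.
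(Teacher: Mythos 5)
Your proposal is correct and follows essentially the same route as the paper's proof: apply the Wedin perturbation bound (Lemma~\ref{lemma:perturbation}) with $\jacobian$ as the design matrix, observe that the bound collapses because only the matrix is perturbed, and then control $\norm{\estimatedjacobian - \jacobian}_2$ via the Hoeffding-based Lemma~\ref{lemma:boundgradient}. If anything, your write-up is more careful than the paper's, which silently drops the residual and $\delta b$ terms, whereas you explicitly justify $r=0$ and $\delta b = 0$ from the exact update $\Delta = \jacobian\,\vomega^L$ and the full-column-rank identification $\vomega^L = \jacobian^{+}\Delta$.
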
\begin{proof}
We need to bound the difference in norm between $\vomega^L$ and $\widehat{\vomega}$ that are the true parameters and the parameters that we recovered solving the minimization problem~\eqref{min:sumomegas}. 
 \resizebox{\linewidth}{!}{%
 \begin{minipage}{\linewidth}
\begin{align}
    \norm{\vomega^L - \widehat{\vomega}}_2 \\
    \le \frac{\kappa}{\left(1-\kappa \frac{\norm{\delta \jacobian}_2}{\norm{\jacobian}_2}\right) \norm{\jacobian}_2 } \left(\ \frac{\norm{\delta \jacobian}_2}{\norm{\jacobian}_2} \norm{\vomega^L}_2 \norm{\jacobian}_2 \right) +  \frac{\norm{\delta \jacobian}_2}{\norm{\jacobian}_2} \norm{y}_2 \norm{\jacobian}_2  \\ 
    \le \frac{\kappa}{c\norm{\jacobian}_2}\left(\ \frac{\norm{\delta \jacobian}_2}{\norm{\jacobian}_2} \norm{\vomega^L}_2 \norm{\jacobian}_2\right) +  \frac{\norm{\delta \jacobian}_2}{\norm{\jacobian}_2} \norm{y}_2 \norm{\jacobian}_2\\ 
    = \norm{\delta \jacobian}_2 \left(\frac{\kappa \norm{\vomega^L}_2 }{c\norm{\jacobian}_2} + \norm{y}_2\right) \\ 
    \le M \sqrt{qd} \sqrt{\frac{\log(\frac{2}{\delta})}{2n}} \left(\frac{\kappa \norm{\vomega^L}_2 }{c\norm{\jacobian}_2} + \norm{y}_2\right),
\end{align}
\end{minipage}}
where line (14) is obtained by using Lemma~\ref{lemma:perturbation}, lines (15, 16) by rearranging the terms, and line (17) by using Lemma~\ref{lemma:boundgradient}. We can observe that the last term vanishes when the rank($\jacobian$)$=q$ (see \cite{wedin1973perturbation}).
\end{proof}

Now we will give the proofs and derivations of Lemmas~\ref{lemma:weightclosedformsum}  and Theorem~\ref{finitesampletwo}.



\weightclosedformsum*
\begin{proof}
Taking the derivative of~\eqref{min:sumomegas} with respect to $\omega$:
\begin{align*}
    \nabla_{\vomega} \sum_{t=1}^{m} \norm{\Delta_t - \alpha_t \nabla_{\vtheta} \vpsi_t\vomega}_2^2 & = 
      \sum_{t=1}^m \nabla_{\vomega} (\Delta_t - \alpha_t \nabla_{\vtheta} \vpsi_t \vomega)^T (\Delta - \alpha \nabla_{\vtheta} \vpsi_t \vomega)\\
      &=\sum_{t=1}^m \nabla_{\vomega} (\Delta_t ^T \Delta_t + (\alpha \nabla_{\vtheta} \vpsi_t \vomega)^T (\alpha_t \nabla_{\vtheta} \vpsi_t \vomega) - 2 \alpha_t \nabla_{\vtheta} \vpsi_t \vomega)^T \Delta_t) \\
     &=2 \left (\sum_{t=1}^m \alpha_t^2 \nabla_{\vtheta} \vpsi_t ^T \nabla_{\vtheta} \vpsi_t \right ) \vomega  - 2 \sum_{t=1}^m \left (\alpha_t \nabla_{\vtheta} \vpsi_t ^T \Delta_t \right ).
\end{align*}
Taking it equal to zero:
\begin{align*}
     \left (\sum_{t=1}^m \alpha_t^2 \nabla_{\vtheta} \vpsi_t ^T \nabla_{\vtheta} \vpsi_t \right ) \vomega  - \sum_{t=1}^m \left (\alpha_t \nabla_{\vtheta} \vpsi_t ^T \Delta_t \right ) = 0 \\
     \vomega = \left (\sum_{t=1}^m \alpha_t^2 \nabla_{\vtheta} \vpsi_t ^T \nabla_{\vtheta} \vpsi_t \right )^{-1} \left (\sum_{t=1}^m \alpha_t \nabla_{\vtheta} \vpsi_t ^T \Delta_t \right )
\end{align*}
\end{proof}
\begin{lemma}
\label{lemma:regularized}
The regularized version of~\eqref{min:sumomegas} is equal to:
\begin{equation*}
    \min_{\vomega} \sum_{t=1}^m \norm{\Delta_t - \alpha_t \jacobian_t \vomega}^2_2 + \lambda \norm{\vomega}^2_2,
\end{equation*}
where $\lambda > 0$.
We can solve the regularized problem in closed form:
\begin{equation*}
    \vomega = \left (\sum_{t=1}^m \alpha_t^2 \nabla_{\vtheta} \vpsi_t ^T \nabla_{\vtheta} \vpsi_t + \lambda \mathbf{I}_d\right )^{-1} \left (\sum_{t=1}^m \alpha_t \nabla_{\vtheta} \vpsi_t ^T \Delta_t \right ).
\end{equation*}
\end{lemma}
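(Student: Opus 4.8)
The plan is to recognize that the regularized objective is a strictly convex quadratic form in $\vomega$, so that the first-order optimality condition is both necessary and sufficient for the global minimizer, and solving it yields the stated closed form. Concretely, I would proceed exactly as in the proof of Lemma~\ref{lemma:weightclosedformsum}, simply carrying the extra penalty term $\lambda\norm{\vomega}_2^2$ through the differentiation. Write the objective as
\begin{equation*}
    L(\vomega) = \sum_{t=1}^{m} \norm{\Delta_t - \alpha_t \jacobian_t \vomega}_2^2 + \lambda \norm{\vomega}_2^2,
\end{equation*}
and note that it is a sum of convex terms, with the regularizer $\lambda\norm{\vomega}_2^2$ strictly convex for $\lambda > 0$, so $L$ is strictly convex and admits a unique minimizer characterized by $\nabla_{\vomega} L(\vomega) = 0$.

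Next I would compute the gradient. The contribution of the data-fitting terms is identical to the one already derived in Lemma~\ref{lemma:weightclosedformsum}, namely $2\left(\sum_{t=1}^{m}\alpha_t^2 \jacobian_t^T\jacobian_t\right)\vomega - 2\sum_{t=1}^{m}\alpha_t\jacobian_t^T\Delta_t$, and the penalty contributes $\nabla_{\vomega}\,\lambda\norm{\vomega}_2^2 = 2\lambda\vomega$. Adding these and setting the result to zero gives the regularized normal equations
\begin{equation*}
    \left(\sum_{t=1}^{m}\alpha_t^2 \jacobian_t^T\jacobian_t + \lambda \identity_d\right)\vomega = \sum_{t=1}^{m}\alpha_t\jacobian_t^T\Delta_t,
\end{equation*}
and inverting the matrix on the left yields precisely the claimed expression for $\vomega$.

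The one point that genuinely requires argument, and which is the whole motivation for introducing the regularizer, is that the system matrix is invertible \emph{without} any full-rank hypothesis on the Jacobians. Here I would observe that each $\alpha_t^2\jacobian_t^T\jacobian_t$ is positive semidefinite, so their sum is positive semidefinite, and adding $\lambda\identity_d$ with $\lambda > 0$ shifts every eigenvalue up by $\lambda$, making the matrix strictly positive definite and hence invertible. This simultaneously guarantees the existence of the closed-form solution and confirms, via the positive definiteness of the Hessian $2\left(\sum_{t}\alpha_t^2\jacobian_t^T\jacobian_t + \lambda\identity_d\right)$, that the unique stationary point is indeed the global minimum. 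I do not expect any serious obstacle beyond this invertibility observation; the remainder is the routine differentiation already carried out for the unregularized case.
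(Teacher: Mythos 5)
Your proposal is correct and follows essentially the same route as the paper's own proof: differentiate the regularized objective, note the penalty contributes $2\lambda\vomega$ on top of the gradient already computed in Lemma~\ref{lemma:weightclosedformsum}, set the result to zero, and solve the normal equations. Your additional observation that $\sum_t \alpha_t^2 \jacobian_t^T \jacobian_t + \lambda \identity_d$ is positive definite for $\lambda > 0$ (hence invertible without any rank assumption, and the stationary point is the unique global minimum) is a worthwhile justification that the paper leaves implicit.
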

\begin{proof}
Taking the derivative respect to $\omega$:
\begin{align*}
    \nabla_{\vomega} \sum_{t=1}^{m} \norm{\Delta_t - \alpha_t \nabla_{\vtheta} \vpsi_t\vomega}_2^2 + \lambda\norm{\vomega}_2^2  =
      \sum_{t=1}^m \nabla_{\vomega} (\Delta_t - \alpha_t \nabla_{\vtheta} \vpsi_t \vomega)^T (\Delta - \alpha \nabla_{\vtheta} \vpsi_t \vomega) + \nabla_{\vomega} \lambda \vomega^T \vomega\\
      =\sum_{t=1}^m \nabla_{\vomega} (\Delta_t ^T \Delta_t + (\alpha \nabla_{\vtheta} \vpsi_t \vomega)^T (\alpha_t \nabla_{\vtheta} \vpsi_t \vomega) - 2 \alpha_t \nabla_{\vtheta} \vpsi_t \vomega)^T \Delta_t) + 2\lambda\vomega \\
     =2 \left (\sum_{t=1}^m \alpha_t^2 \nabla_{\vtheta} \vpsi_t ^T \nabla_{\vtheta} \vpsi_t \right ) \vomega  - 2 \sum_{t=1}^m \left (\alpha_t \nabla_{\vtheta} \vpsi_t ^T \Delta_t \right ) + 2\lambda\vomega.
\end{align*}
Taking it equal to zero:
\begin{align*}
     \left (\sum_{t=1}^m \alpha_t^2 \nabla_{\vtheta} \vpsi_t ^T \nabla_{\vtheta} \vpsi_t + \lambda \mathbf{I}_d\right) \vomega  - \sum_{t=1}^m \left (\alpha_t \nabla_{\vtheta} \vpsi_t ^T \Delta_t \right ) = 0 \\
     \vomega = \left (\sum_{t=1}^m \alpha_t^2 \nabla_{\vtheta} \vpsi_t ^T \nabla_{\vtheta} \vpsi_t + \lambda \mathbf{I}_d \right )^{-1} \left (\sum_{t=1}^m \alpha_t \nabla_{\vtheta} \vpsi_t ^T \Delta_t \right ).
\end{align*}
\end{proof}
\finitesampletwo*
\begin{proof}
We decompose the estimated Jacobian $\grads = \grads + E$, where $E$ is the random variable component caused by the estimation of the $\jacobian$. Since we estimate the jacobians with an unbiased estimator the mean of $E$ is $0$. We reshape the $\grads$ and $E$ as $\grads \in \reals^{m \times dq}$ and $E \in \reals^{m \times dq}$. Now $E$, since its mean is $0$ and all lines are independent of each other, is a sub-Gaussian matrix with parameters $(\frac{1}{m} \Sigma_E, \frac{1}{m}\sigma_E)$. The proof is similar to the proof of Theorem 1 in~\cite{mcwilliams2014fast}.
\begin{align*}
     \norm{  (\grads + E)^T(\grads^T \vomega^L) - (\grads + E)^T(\grads + E)\vomega^L}_2 \\
     = \norm{  \grads^T \jacobian^T \vomega^* + E^T \grads^T \vomega^L - \grads^T \grads \vomega^L - \grads^T E \vomega^* - E \grads^T \vomega^* - E^T E \vomega^L}_2 \\
     = \norm{ - \grads^T E \vomega^L - E^T E \vomega^L}_2.
\end{align*}
Now we bound separately these two terms, using Lemma~\ref{subgaus} as in \cite{mcwilliams2014fast}:
\begin{align*}
    \norm{\grads^T E \vomega}_2 &\le \norm{\grads}_2 \sigma_E \norm{\vomega^L}_2 \sqrt{\frac{\log dq}{m}} \\
    \norm{E^TE\vomega^L}_2 &= \norm{(E^TE + \sigma_E^2 \identity_{qd} - \sigma_E^2 \identity_{qd} )\vomega^L}_2\le \sigma_E^2 \left(C \sqrt{\frac{\log dq}{m}} + \sqrt{dq}\right) \norm{\vomega^L}_2
\end{align*}
with probability $1 - c_1 \exp(-c_2 \log q)$ where $c_1, c_2$ are positive constants that do not depend on $\sigma_E, n, q$.
So now applying Lemma~\ref{boundold}:
\begin{equation*}
    \norm{\vomega^L - \widehat{\vomega}}_2 \le \frac{1}{\lambda}\left(\norm{\grads}_2 \sigma_E \norm{\vomega^L}_2 \sqrt{\frac{\log dq}{m}} + \sigma_E^2 \left(C \sqrt{\frac{\log dq}{m}} + \sqrt{dq}\right) \norm{\vomega^L}_2\right)
\end{equation*}
w.h.p..

Now we need to bound the random variable $\sigma_E$. Remember that $E_i=\jacobian_i - \estimatedjacobian_i$. Since $\estimatedjacobian$ are assumed to be bounded by $M$, by applying Hoeffding's inequality, with probability $1-\delta_1$:
\begin{equation*}
    \norm{E_i}_2 = \norm{\estimatedjacobian_i - \jacobian_i}_2 \le M  \sqrt{\frac{dq\log(\frac{2}{\delta_1})}{2n}}. 
\end{equation*}
So $E$ is a subgaussian random variable where each component is bounded by $M  \sqrt{\frac{dq\log(\frac{2}{\delta_1})}{2n}}$. 

Then:
\begin{align*}
\text{P}\left[ \norm{\vomega^L - \widehat{\vomega}}_2 \ge  \frac{1}{\lambda} M \sqrt{\frac{dq\log(\frac{2}{\delta_1})}{2n}}   \norm{\vomega^L}_2 \norm{\grads}_2 \sqrt{\frac{\log dq}{m}} +  M \frac{dq\log(\frac{2}{\delta_1})}{2n} C \sqrt{\frac{\log dq}{m}} + \sqrt{dq}\right] \\ \le 
    \text{P}\left[ \norm{\vomega^L - \widehat{\vomega}}_2 \ge  \frac{1}{\lambda} M \sqrt{\frac{dq\log(\frac{2}{\delta_1})}{2n}}   \norm{\vomega^L}_2 \left (\norm{\grads}_2 \sqrt{\frac{\log dq}{m}} + C \sqrt{\frac{\log dq}{m}} + \sqrt{dq}\right)\right]\\
    \le \delta_1 + c1 \exp (-c2 \log dq)
\end{align*}
So the result follows where with w.h.p. we mean with probability $1-(\delta_1 + c1 \exp (-c2 \log dq))$ as in~\cite{mcwilliams2014fast}.
\end{proof}

\subsection{Proofs of Section~\ref{sec:learndem}}
In this section we provide the proofs and derivations of the theorems in Section~\ref{sec:learndem}.

\begin{lemma}
\label{lemma:MLE}
Given a dataset $D = \{(s_1,a_1), \cdots, (s_n,a_m)\}$ of state-action couples sampled from a Gaussian linear policy $\pi_{\vtheta}(\cdot|s) \sim \mathcal{N}(\vtheta^T\vvarphi(s), \sigma^2)$ such that $S \in \reals^{n \times p}$ is the matrix of states features and let the minimum singular value of $(S^TS)$ $\sigma_{\min} \ge \eta$, then the error between the maximum likelihood estimator $\vtheta^{MLE}$ and the mean $\vtheta$ is, with probability $1- \delta$:
\begin{equation*}
    \norm{\vtheta^{\text{MLE}} - \vtheta}_2 \le \sigma \sqrt{\frac{r + \log(\frac{1}{\delta})}{n \eta}},
\end{equation*}
where $r$ is the rank($S^TS$).
\end{lemma}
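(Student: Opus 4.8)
The plan is to recognize the Gaussian linear policy as an ordinary fixed-design linear-regression model and then invoke the least-squares guarantee of Theorem~\ref{fixeddesign} almost verbatim. Concretely, each observed action satisfies $a_i = \vtheta^T\vvarphi(s_i) + \epsilon_i$ with $\epsilon_i \sim \mathcal{N}(0,\sigma^2)$ independent across $i$, so stacking the actions into a vector $\bm{a} \in \reals^n$ and writing $S$ for the matrix whose $i$-th row is $\vvarphi(s_i)^T$ yields the model $\bm{a} = S\vtheta + \bm{\epsilon}$.

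First I would show that the maximum-likelihood estimator coincides with the ordinary least-squares solution. Writing out the Gaussian log-likelihood, each term equals $-\frac{1}{2\sigma^2}(a_i - \vtheta^T\vvarphi(s_i))^2$ up to an additive constant independent of $\vtheta$, so maximizing $\sum_i \log \pi_{\vtheta}(a_i|s_i)$ over $\vtheta$ is equivalent to minimizing $\norm{\bm{a} - S\vtheta}_2^2$, whose minimizer is $\vtheta^{\text{MLE}} = (S^TS)^{+}S^T\bm{a}$. This is exactly the least-squares objective appearing in Theorem~\ref{fixeddesign}, under the identification of design matrix $A = S$ and response $b = \bm{a}$.

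Next I would verify the noise hypothesis and apply the theorem. Since each $\epsilon_i$ is a centered Gaussian with variance $\sigma^2$, it is sub-Gaussian with variance proxy $\sigma^2$, so $\bm{\epsilon} \sim \mathrm{subGn}(\sigma^2)$ as required. I treat the states $s_1,\dots,s_n$ (equivalently $S$) as fixed, which is legitimate because the hypothesis $\sigma_{\min}(S^TS)\ge\eta$ is a statement about the realized design; all remaining randomness lives in the actions through $\bm{\epsilon}$, matching the fixed-design setting. Applying Theorem~\ref{fixeddesign} then gives, with probability at least $1-\delta$,
\[
\norm{\vtheta^{\text{MLE}} - \vtheta}_2 \le \sigma\sqrt{\frac{r + \log(\frac{1}{\delta})}{n\,\sigma_{\min}(S^TS)}},
\]
and monotonicity together with $\sigma_{\min}(S^TS)\ge\eta$ in the denominator yields the claimed inequality with $r = \mathrm{rank}(S^TS)$.

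The main point requiring care is a bookkeeping one rather than a deep obstacle: aligning the notation of Theorem~\ref{fixeddesign} (where $\sigma_{\min}$ denotes the minimum singular value of $A^TA$ and the factor $n$ appears explicitly in the denominator) with the lemma's hypothesis stated on $S^TS$, and justifying the conditioning on the design so that the theorem's sub-Gaussian noise assumption applies to $\bm{\epsilon}$ alone. Once these identifications are fixed, the bound follows immediately from Theorem~\ref{fixeddesign}.
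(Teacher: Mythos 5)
Your proposal is correct and follows essentially the same route as the paper's own proof: recast the Gaussian maximum-likelihood problem as an ordinary least-squares problem with design matrix $S$ and response vector of actions, then invoke Theorem~\ref{fixeddesign} with the sub-Gaussian noise $\epsilon$ and the lower bound $\sigma_{\min}(S^TS)\ge\eta$. Your treatment is in fact slightly more careful than the paper's on the bookkeeping points (explicitly verifying the sub-Gaussian hypothesis and justifying the fixed-design conditioning), but the argument is the same.
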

\label{lemmamle}
\begin{proof}
We start by stating that the maximum likelihood for linear Gaussian policies can be recast as an ordinary least-squares problem.  We write the Likelihood $L(\vtheta)$
\begin{align*}
\log L(\vtheta) = \log \left( \prod_{i=1}^n\pi(a_i|s_i)  \right) \\
= \sum_{i=1}^n \log \left(\frac{1}{\sqrt{2\pi\sigma^2}} \exp \left(-\frac{(a_i-\vtheta^T \vvarphi(s_i))^2}{2\sigma^2}\right)\right) \\
= n \log \left(\frac{1}{\sqrt{2\pi\sigma^2}}\right) - \sum_{i=1}^n \frac{(a_i-\vtheta^T \vvarphi(s_i))^2}{2\sigma^2}
\end{align*}
The resulting maximum likelihood problem is given by:
\begin{equation*}
     \max_{\vtheta} \log L(\vtheta) = \min_{\vtheta} \sum_{i=1}^n (a_i-\vtheta^T \vvarphi(s_i))^2
\end{equation*}
So we have the following linear least-squares problem:
\begin{equation*}
    \min_{\theta} \norm{S\vtheta - A + \epsilon}_2,
\end{equation*}
where $\epsilon$ is an error with mean $0$ and variance $\sigma^2$, $S \in \reals^{n\times d}$ is the matrix of states features and $A \in \reals^n$ is the vector of actions. Using Theorem~\ref{fixeddesign}, we can say that with probability $1- \delta$:
\begin{align*}
    \norm{\vtheta^{\text{MLE}} - \vtheta}_2 \le \sigma \sqrt{\frac{r + \log(\frac{1}{\delta})}{n \eta}},
\end{align*}
where $r$ is the rank($S^TS$).
\end{proof}

\begin{lemma}
Given two Gaussian policies $\pi_{\vtheta_1}(\cdot|s) \sim \mathcal{N}(\vtheta_1^T\vvarphi(s), \sigma^2)$ and $\pi_{\vtheta_2}(\cdot|s) \sim \mathcal{N}(\vtheta_2^T\vvarphi(s), \sigma^2)$ with same variance and  the state features are bounded by $M_S$:
\begin{equation*}
    \norm{\nabla_{\vtheta} \log \pi_{\vtheta_1}(a|s) - \nabla_{\vtheta} \log \pi_{\vtheta_2}(a|s)}_2 \le \frac{M_{S}^2}{\sigma^2} \norm{\vtheta_1 - \vtheta_2}_2.
\end{equation*}
\end{lemma}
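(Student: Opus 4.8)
The plan is to compute both score functions in closed form and subtract them, exploiting the fact that everything except the mean term is shared between the two policies. First I would write the log-density of the Gaussian linear policy explicitly,
\[
\log \pi_{\vtheta}(a|s) = -\tfrac{1}{2}\log(2\pi\sigma^2) - \frac{(a - \vtheta^T\vvarphi(s))^2}{2\sigma^2},
\]
and differentiate with respect to $\vtheta$ via the chain rule, obtaining the score
\[
\nabla_{\vtheta}\log\pi_{\vtheta}(a|s) = \frac{a - \vtheta^T\vvarphi(s)}{\sigma^2}\,\vvarphi(s).
\]

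Next I would form the difference of the two scores. Since the additive normalizing constant and the variance $\sigma^2$ are identical, and the feature vector $\vvarphi(s)$ is the same in both policies, the term involving the action $a$ cancels exactly, leaving
\[
\nabla_{\vtheta}\log\pi_{\vtheta_1}(a|s) - \nabla_{\vtheta}\log\pi_{\vtheta_2}(a|s) = \frac{(\vtheta_2 - \vtheta_1)^T\vvarphi(s)}{\sigma^2}\,\vvarphi(s).
\]
The key structural observation is that the coefficient $(\vtheta_2 - \vtheta_1)^T\vvarphi(s)$ is a scalar, so the right-hand side is merely a rescaling of the fixed direction $\vvarphi(s)$.

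Then I would take the Euclidean norm, pull the scalar out, and apply the feature bound twice: once to the outer factor via $\norm{\vvarphi(s)}_2 \le M_S$, and once inside the scalar through Cauchy--Schwarz, $|(\vtheta_2 - \vtheta_1)^T\vvarphi(s)| \le \norm{\vtheta_1 - \vtheta_2}_2\,\norm{\vvarphi(s)}_2 \le M_S\,\norm{\vtheta_1-\vtheta_2}_2$. Multiplying the two contributions produces the factor $M_S^2$ and yields the claimed bound $\tfrac{M_S^2}{\sigma^2}\,\norm{\vtheta_1-\vtheta_2}_2$, using that $\norm{\vtheta_2-\vtheta_1}_2=\norm{\vtheta_1-\vtheta_2}_2$.

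There is no real obstacle here, as the result is a direct calculation; the only point worth flagging is why $M_S$ enters squared. This happens because the feature vector $\vvarphi(s)$ appears twice — once as the direction of the score difference and once inside the inner product with $\vtheta_2-\vtheta_1$ — so each occurrence contributes one power of the feature bound. This Lipschitz estimate is precisely what will later control how the behavioral-cloning error $\norm{\widehat\vtheta_i-\vtheta_i}_2$ propagates into the estimated Jacobian $\estimatedjacobian$, feeding into the perturbation argument behind Theorem~\ref{finitesamplegeneral}.
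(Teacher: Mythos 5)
Your proof is correct and follows essentially the same route as the paper: both compute the Gaussian score in closed form, observe that the action term cancels in the difference, and then apply Cauchy--Schwarz together with the feature bound $\norm{\vvarphi(s)}_2 \le M_S$ twice to obtain the factor $M_S^2/\sigma^2$. Your write-up is in fact slightly cleaner than the paper's, which states the score with an ambiguous transpose, but the argument is identical.
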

\label{gaussianpolicies}
\begin{proof}
The gradient of the log policy of a general policy $\pi_{\vtheta}(a|s)$ is:
\begin{equation*}
    \nabla_{\vtheta} \log \pi(a|s) = \frac{\vvarphi(s)^T (a-\vtheta^T \vvarphi(s))}{\sigma^2}.
\end{equation*}
Now we apply this result to the difference in norm between two Gaussian log policies:
\begin{align}
    \norm{\nabla_{\vtheta} \log \pi_{\vtheta_1}(a|s) - \nabla_{\vtheta} \log \pi_(a|s)}_2 
    &= \norm{\frac{\vvarphi(s)^T (a-\vtheta_1^T \vvarphi(s))}{\sigma^2} - \frac{\vvarphi(s)^T (a-\vtheta_2^T \vvarphi(s))}{\sigma^2}}_2 \\
    &= \norm{\frac{\vvarphi(s)}{\sigma^2} (\vtheta_1^T \vvarphi(s) -\vtheta_2^T \vvarphi(s))}_2 \\
    &\le \norm{\frac{\vvarphi(s)}{\sigma^2}}_2 \norm{\vtheta_1 -\vtheta_2}_2 \norm{\vvarphi(s)}_2 \\
    &\le \frac{M_S^2}{\sigma^2} \norm{\vtheta_1 -\vtheta_2}_2.
\end{align}
In line (19) we use the Cauchy-Schwartz inequality,  and in line (20) the assumption that the state features are bounded by $M_S$. 
\end{proof}

\begin{lemma}\label{differencesgrad}
Given a dataset $D = \{\tau_1, \cdots, \tau_n\}$ of trajectories such that every trajectory $\tau_i = \{(s_1,a_1), \cdots, (s_T,a_T)\}$ is sampled from a Gaussian linear policy $\pi_{\vtheta}(\cdot|s) \sim \mathcal{N}(\vtheta^T\vvarphi(s), \sigma)$, the maximum likelihood estimator $\vtheta^{MLE}$ estimated on $D$, the condition of Lemma~\ref{lemma:MLE} holds, the $\estimatedjacobian$ uniformly bounded by $M$, the state features bounded by $M_{S}$, the reward features bounded by $M_R$. Let $S \in \reals^{n \times p}$ be the matrix of state features and let $\sigma_{\min}(S^TS) \ge \eta$. Then with probability $1-\delta$:
\begin{equation*}
    \norm{\estimatedjacobian(\vtheta^{MLE}) - \jacobian(\vtheta)}_2 \le
    M \sqrt{qd} \sqrt{\frac{\log(\frac{2}{\delta})}{2n}}  + \frac{T M_{S}^2 M_R}{(1-\gamma) \sigma}  \sqrt{\frac{r + \log(\frac{1}{\delta})}{n \eta}},
\end{equation*}
where $\gamma$ is the discount factor and $r$ is the rank of $S^TS$. 
\end{lemma}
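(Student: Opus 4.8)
The plan is to control the target quantity by splitting it into a \emph{gradient estimation} error, evaluated at the \emph{true} parameter $\vtheta$, and a \emph{parameter perturbation} error capturing the effect of evaluating the estimator at the behavioral-cloning estimate $\vtheta^{MLE}$ rather than at $\vtheta$. Concretely, I would insert the intermediate term $\estimatedjacobian(\vtheta)$ and apply the triangle inequality,
\begin{equation*}
\norm{\estimatedjacobian(\vtheta^{MLE}) - \jacobian(\vtheta)}_2 \le \norm{\estimatedjacobian(\vtheta^{MLE}) - \estimatedjacobian(\vtheta)}_2 + \norm{\estimatedjacobian(\vtheta) - \jacobian(\vtheta)}_2.
\end{equation*}
Choosing $\estimatedjacobian(\vtheta)$ as the pivot (rather than $\jacobian(\vtheta^{MLE})$) is deliberate: it keeps the sampled trajectories fixed in the first term, so the only object that varies between the two estimators is the parameter inside the score $\grad\log\pi_\vtheta$. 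This makes the Lipschitz argument clean and avoids having to control how the trajectory distribution itself moves with $\vtheta$, which the alternative pivot would force.

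The second summand is exactly the object bounded in Lemma~\ref{lemma:boundgradient}: since the estimator is built from $n$ trajectories and is uniformly bounded by $M$, the Hoeffding argument there yields $\norm{\estimatedjacobian(\vtheta) - \jacobian(\vtheta)}_2 \le M\sqrt{qd}\sqrt{\log(2/\delta)/(2n)}$ with high probability, reproducing the first term of the claimed bound. For the first summand I would write out the score-function form of the estimator so that
\begin{equation*}
\estimatedjacobian(\vtheta^{MLE}) - \estimatedjacobian(\vtheta) = \frac{1}{n}\sum_{i=1}^n \sum_{t=0}^{T-1} \gamma^t \vphi(s_t^i,a_t^i)\Bigl(\sum_{l=0}^t \bigl[\grad\log\pi_{\vtheta^{MLE}} - \grad\log\pi_{\vtheta}\bigr](a_l^i|s_l^i)\Bigr)^{\!T},
\end{equation*}
and bound it termwise using sub-multiplicativity of the norm on each rank-one block.

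The feature factor contributes $M_R$, each score difference is controlled by the Gaussian Lipschitz estimate of Lemma~\ref{gaussianpolicies}, namely $\tfrac{M_S^2}{\sigma^2}\norm{\vtheta^{MLE}-\vtheta}_2$, the inner sum over $l$ contributes a factor $t+1 \le T$, and the discounted outer sum contributes $\sum_{t=0}^{T-1}\gamma^t(t+1) \le T\sum_{t=0}^{T-1}\gamma^t \le \tfrac{T}{1-\gamma}$. Collecting these gives $\norm{\estimatedjacobian(\vtheta^{MLE}) - \estimatedjacobian(\vtheta)}_2 \le \tfrac{T M_S^2 M_R}{(1-\gamma)\sigma^2}\norm{\vtheta^{MLE}-\vtheta}_2$. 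Substituting the behavioral-cloning concentration bound of Lemma~\ref{lemma:MLE}, $\norm{\vtheta^{MLE}-\vtheta}_2 \le \sigma\sqrt{(r+\log(1/\delta))/(n\eta)}$, the $\sigma$ from the MLE bound cancels one power of $\sigma$ in the Lipschitz constant, leaving $\tfrac{T M_S^2 M_R}{(1-\gamma)\sigma}\sqrt{(r+\log(1/\delta))/(n\eta)}$, which is precisely the second summand. Adding the two pieces produces the stated bound.

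The individual estimates are routine; the part needing the most care is the probabilistic bookkeeping. The gradient-estimation event and the MLE event each hold with probability $1-\delta$, so a clean statement requires a union bound, splitting the budget (e.g., $\delta/2$ each) and absorbing the resulting constants into the logarithmic factors, after which the two high-probability inequalities may be added simultaneously. A secondary subtlety is checking that the chosen score-function estimator is one for which both the boundedness hypothesis of Lemma~\ref{lemma:boundgradient} and the horizon bound $\sum_{t}\gamma^t(t+1)\le T/(1-\gamma)$ apply; for G(PO)MDP-style estimators on length-$T$ trajectories both hold, and the uniform bound on $\estimatedjacobian$ covers its evaluation at both $\vtheta$ and $\vtheta^{MLE}$.
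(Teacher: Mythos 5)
Your proposal follows essentially the same route as the paper's proof: the identical triangle-inequality split around the pivot $\estimatedjacobian(\vtheta)$, with one term handled by the Hoeffding bound of Lemma~\ref{lemma:boundgradient} and the other by combining the Gaussian score Lipschitz estimate (Lemma~\ref{gaussianpolicies}) with the MLE concentration of Lemma~\ref{lemma:MLE}, so that one power of $\sigma$ cancels exactly as you describe. The only deviations are cosmetic: the paper writes the estimator in REINFORCE form with discounted rewards-to-go (yielding the factor $T\cdot M_R/(1-\gamma)$) rather than your G(PO)MDP form (yielding $M_R\sum_t\gamma^t(t+1)\le TM_R/(1-\gamma)$), and your explicit union-bound bookkeeping over the two high-probability events is in fact more careful than the paper, which simply adds the two bounds.
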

\begin{proof}
We start by decomposing the norm of the difference in two components, using triangular inequality:
\begin{align*}
    \norm{\estimatedjacobian(\widehat{\vtheta}) - \jacobian(\vtheta)}_2 
    \le \norm{\estimatedjacobian(\vtheta) - \jacobian(\vtheta)}_2 + \norm{\estimatedjacobian(\vtheta) - \estimatedjacobian(\widehat{\vtheta})}_2.
\end{align*}
The first component is bounded by Lemma~\ref{lemma:boundgradient}. We will bound now the second component, using Reinforce estimator for the gradient:
\begin{align}
    &\norm{\estimatedjacobian(\vtheta) - \estimatedjacobian(\widehat{\vtheta})}_2 =\\ 
    &= \norm{\frac{1}{n}\sum_{i=1}^n\sum_{t=1}^T \nabla_{\vtheta} \log \pi_{\vtheta}(a_{i,t}|s_{i,t}) R_{i,t} \gamma^{t} - \frac{1}{n}\sum_{i=1}^n\sum_{t=1}^T \nabla_{\vtheta} \log \pi_{\hat\vtheta}(a_{i,t}|s_{i,t}) R_{i,t} \gamma^{t}}_2 \\
    &= \frac{1}{n} \norm{\sum_{i=1}^n\sum_{t=1}^T (\nabla_{\vtheta} \log \pi_{\vtheta}(a_{i,t}|s_{i,t}) - \nabla_{\vtheta} \log \pi_{\widehat{\vtheta}}(a_{i,t}|s_{i,t})) R_{i,t}  \gamma^{t}}_2 \\
    &\le \frac{1}{n} \sum_{i=1}^n\sum_{t=1}^T \norm{(\nabla_{\vtheta} \log \pi_{\vtheta}(a_{i,t}|s_{i,t}) - \nabla_{\vtheta} \log \pi_{\widehat{\vtheta}}(a_{i,t}|s_{i,t})) }_2 \norm{  R_t  \gamma^{t}}_2 \\
    &\le \frac{1}{n}\frac{M_R}{(1-\gamma)} \sum_{i=1}^n\sum_{t=1}^T \frac{M_{S}^2}{\sigma^2} \norm{\vtheta -\widehat{\vtheta}}_2  \\
    &\le \frac{T M_{S}^2 M_R}{\sigma^2(1-\gamma)}  \sigma \sqrt{\frac{r + \log(\frac{1}{\delta})}{n \eta}}.
\end{align}
In line (26) we apply the Cauchy-Schwartz inequality. In line (27) we apply lemma \ref{gaussianpolicies} and in line (28) we apply lemma \ref{lemmamle}.
Merging the two results the proof follows.
\end{proof}

\finitesamplegeneral*
\begin{proof}
First we have to bound the error on $\Delta$ created by the behavioral cloning. Given $\Delta = \vtheta_2 - \vtheta_1$ and $\widehat{\Delta} = \widehat{\vtheta}_2 - \hat{\vtheta}_1$:
\begin{equation}\label{differencesdelta}
    \norm{\Delta - \widehat{\Delta}} = \norm{\vtheta_2 - \vtheta_1 - \widehat{\vtheta}_2 + \hat{\vtheta}_1} \\
    \le \norm{\vtheta_1 - \widehat{\vtheta}_1} + \norm{\vtheta_2 - \widehat{\vtheta}_2} \le  2\sigma \sqrt{\frac{r + \log(\frac{1}{\delta})}{n \eta}}.
\end{equation}
So we can bound the difference in norm between the real weights $\vomega^L$ and the estimated weights $\widehat{\vomega}$. We indicate with $\kappa$ the condition number of $\jacobian$, with $\chi = \frac{\norm{\estimatedjacobian - \jacobian}_2}{\norm{\jacobian}_2}$ and $y = \jacobian^{+H}\vomega$.  We apply the pertubation Lemma~\ref{lemma:perturbation}.
\begin{align}
    \norm{\vomega^L - \widehat{\vomega}}_2 &\le \frac{\kappa}{(1- \kappa \chi) \norm{\jacobian}_2} (\chi \norm{\vomega^L}_2 \norm{\jacobian}_2 + \norm{\Delta - \hat{\Delta}}_2) + \chi \norm{y}_2 \norm{\jacobian}_2 \label{eq_1}\\
    &\le \frac{\kappa \norm{\estimatedjacobian - \jacobian}_2}{c \norm{\jacobian}_2} \norm{\vomega^L}_2 + \frac{\kappa \norm{\Delta - \widehat{\Delta}}_2}{c \norm{\jacobian}_2} + \norm{\jacobian}_2 \norm{y}_2 \nonumber \\
    &= \norm{\jacobian - \estimatedjacobian}_2 \left( \frac{\kappa \norm{\vomega^L}_2}{c \norm{\jacobian}_2} + \norm{y}_2\right) + \norm{\Delta - \widehat{\Delta}}_2 \frac{\kappa}{c \norm{\jacobian}_2} \nonumber\\
    &= \norm{\jacobian - \estimatedjacobian}_2 \left( \frac{\norm{\vomega^L}_2}{c \sigma_{\min}(\jacobian)} + \norm{y}_2\right) + \norm{\Delta - \widehat{\Delta}}_2 \frac{1}{c \sigma_{\min}(\jacobian)} \nonumber \\
    &\qquad +  2\sigma \sqrt{\frac{r + \log(\frac{1}{\delta})}{n \eta}} \frac{1}{c \sigma_{\min}(\jacobian)} \\
    &\le O\left(\frac{(M+M^2_{S}M_R)}{\sigma_{\min}(\jacobian)} \sqrt{\frac{\log(\frac{2}{\delta})}{n\eta}}\right),
\end{align}
where in line (29) we apply Lemma~\ref{lemma:perturbation} and in line (30) we apply Equation~\eqref{differencesdelta} and Lemma~\ref{differencesgrad}.
\end{proof}

\newpage
\section{Experiments}
In this appendix, we report some experimental details together with some additional experiments. 
\subsection{Gridworld}
In the Gridworld experiment, we select different learning steps for different learners. The number of learning steps depends on the number of policy updates that the learner takes to become an expert. In the following plots, we report the expected discounted return for each learner: Q-Learning (Figure~\ref{fig:qlearning}), G(PO)MDP (Figure~\ref{fig:gpomdp}), SPI (Figure~\ref{fig:spi}), SVI (Figure~\ref{fig:svi}). In these plots, the expected discounted return is estimating using a batch of $50$ trajectories for each learner. The discount factor used in all experiments is $0.96$.
\label{app:experiments}
\begin{figure}[h!]
\begin{minipage}{0.4\textwidth}
 \centering
     \includegraphics[scale=.8]{./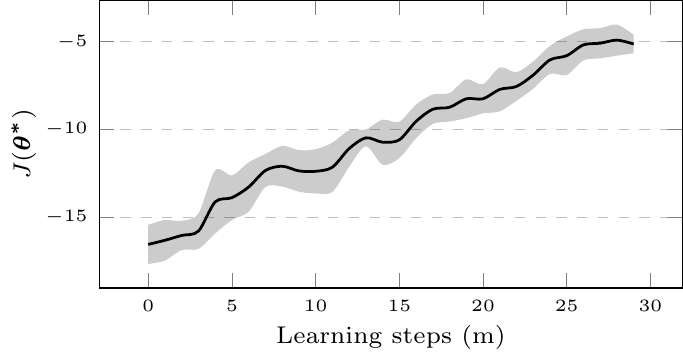}
     \caption{Learning performance of G(PO)MDP. 20 runs, 98\%c.i.}
     \label{fig:gpomdp}
\end{minipage}
\hspace{1cm}
\begin{minipage}{0.4\textwidth}
 \centering
     \includegraphics[scale=.8]{./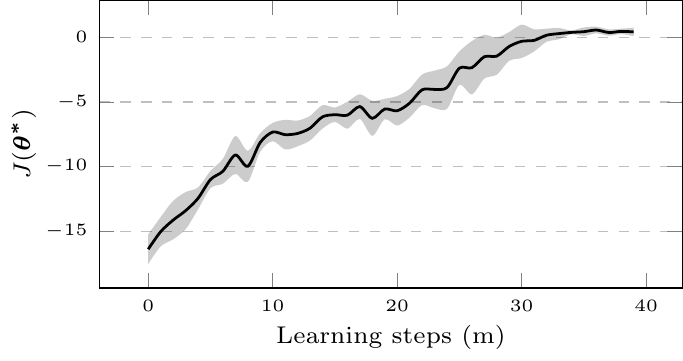}
     \caption{Learning performance of Q-Learning. 20 runs, 98\%c.i.}
     \label{fig:qlearning}
\end{minipage}
\end{figure}
\begin{figure}[h!]
\begin{minipage}{0.4\textwidth}
 \centering
     \includegraphics[scale=.8]{./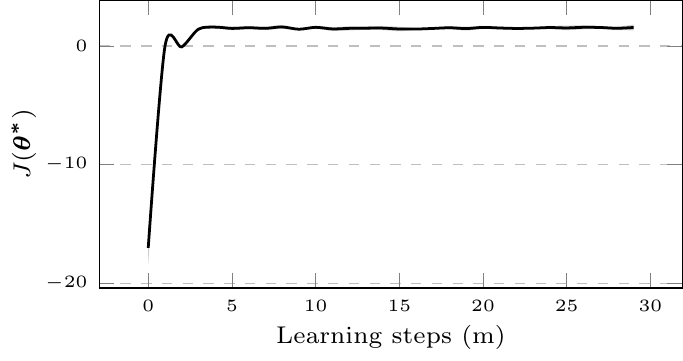}
     \caption{Learning performance of SPI. 20 runs, 98\%c.i.}
     \label{fig:spi}
     \end{minipage}
     \hspace{1cm}
\begin{minipage}{0.4\textwidth}
 \centering
     \includegraphics[scale=.8]{./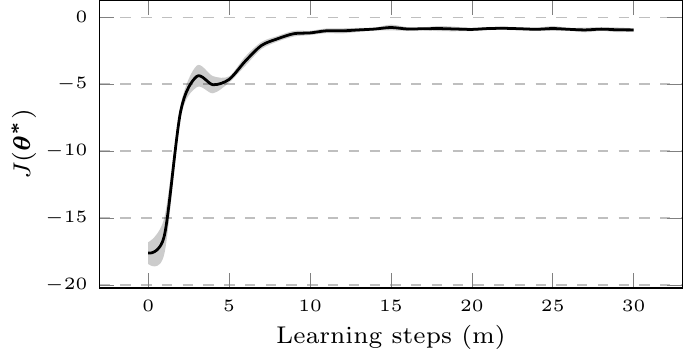}
     \caption{Learning performance of SVI. 20 runs, 98\%c.i.}
     \label{fig:svi}
     \end{minipage}

\end{figure}
\newpage
\subsection{MuJoCo additional experiments}
For the MuJoCo experiments, we use the same hyperparameters as in~\cite{lflpaper}, apart from that we use $16$ parallel agents for PPO, due to resource constraints. The number of forward steps are settled to $2000$. As in~\cite{lflpaper}, we select a subset of the learner's trajectories and we do not use the first $10$ trajectories because the first phase of learning is too noisy. We evaluate the algorithms on the first $1$ million environment steps.

In this section, we report additional results on the comparison between \algname~and another algorithm, T-REX \cite{brown2019extrapolating} which is not created for the LfL setting but which uses suboptimal trajectories. 
The T-REX algorithm aims to recover a reward function from ranked trajectories, where the rank is given by an oracle and is based on the expected discounted return. We use the algorithm in the LfL setting, where we approximate the ranking with the temporal updates of the policies, as was done in an example in the original paper. We use, as in the other MuJoCo experiments, the original reward function of the MuJoCo environment to test the performance of the algorithm and the trajectories from learning step $10$ to learning step $20$. 

We implement the reward function as in the original paper with a three layer neural network with $256$ as hidden size. The T-REX algorithm succeeds in recovering a good approximation of the reward weights in the Hopper domain as it is shown in Figure~\ref{fig:hopper_only_pref}, but not with the same performance of \algname~(see Figure~\ref{fig:hopper_pref}). Instead, T-REX does not succeed into recovering the reward function of the Reacher environment (see Figure~\ref{fig:reacher_pref}). 

It is worth noting that this is an unfair comparison as the T-REX algorithm was not created for the LfL setting.
\begin{figure}[h!]
\begin{minipage}{0.45\linewidth}
    \centering
    \includegraphics[scale=0.9]{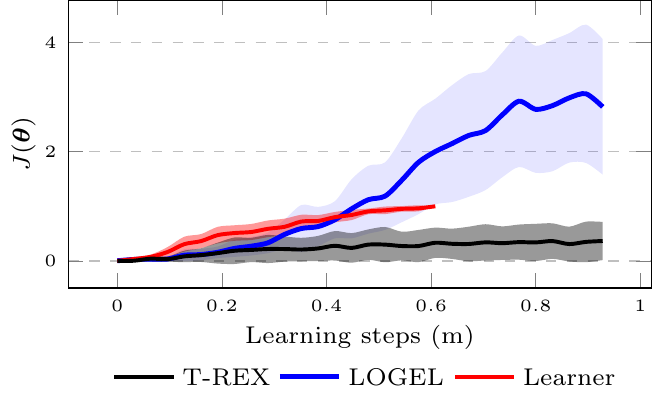}
    \caption{The red line is the performance of the learner on the first $20$ steps of learning. The blue line is the performance of \algname~on the first $30$ steps and the black line the performance of T-REX on the first $30$ steps of learning. The two algorithm are evaluated on the real reward weights. $10$ runs, 98\% c.i. }
    \label{fig:hopper_pref}
\end{minipage}
\hspace{1cm}
\begin{minipage}{0.45\linewidth}
    \centering
    \includegraphics[scale=0.9]{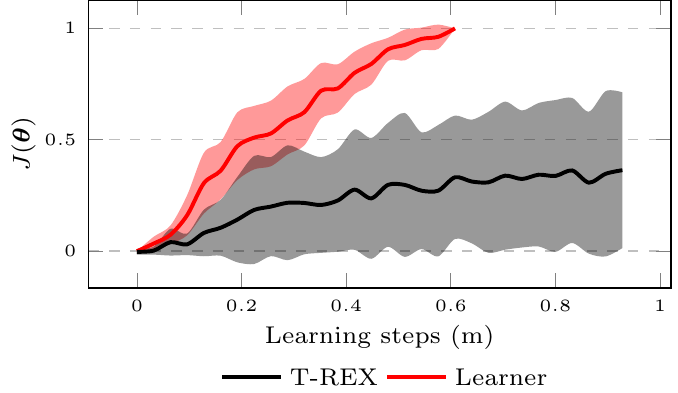}
    \caption{The red line is the performance of the learner on the first $20$ steps of learning. The black line the performance of T-REX on the first $30$ steps of learning. The algorithm is evaluated on the real reward weights.$10$ runs, 98\% c.i.  }    \label{fig:hopper_only_pref}
\end{minipage}
\end{figure}

\begin{figure}[h!]
    \centering
    \includegraphics{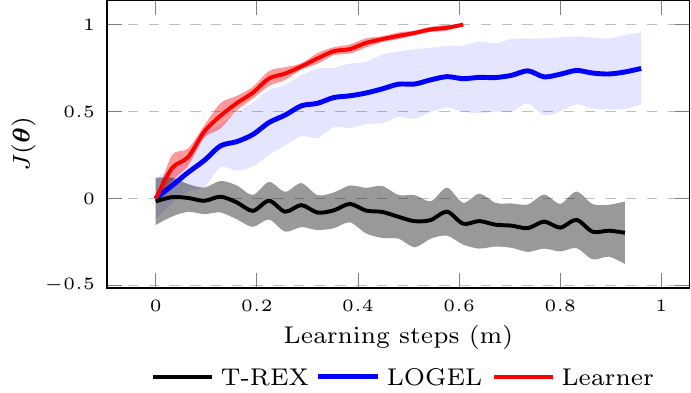}
    \caption{The red line is the performance of the learner on the first $20$ steps of learning. The blue line is the performance of \algname~on the first $30$ steps and the black line the performance of T-REX on the first $30$ steps of learning. The two algorithm are evaluated on the real reward weights.$10$ runs, 98\% c.i.  }
    \label{fig:reacher_pref}
\end{figure}
\newpage
\subsection{Autonomous driving scenario}
In this section, we report an additional, preliminary experiment, that we perform on a simulator driving scenario. We employ SUMO simulator, an open-source, continuous road traffic simulation package designed to handle large road networks. SUMO focuses on the high-level control of the car, integrating an internal system that controls the vehicle dynamics. During the simulation, SUMO provides information on the other vehicles around the ego vehicle. 

We consider a crossroad scenario which consists of an intersection with an arbitrary number of roads. The vehicle coming from the source road has to reach a target road that has a higher priority. The goal of the agent is to drive the ego car and enter the target road, avoiding dangerous maneuvers.

The reward features consists of four components: 
\begin{itemize}
    \item \emph{Time}:  a constant feature at each decision step,
    \item \emph{Jerk}: the absolute value of the instantaneous jerk, i.e., the finite- difference derivative of the acceleration,
    \item \emph{Harsh Slow Down}: a binary feature,  which activates whenever the velocity is lower than a threshold,
    \item \emph{Crash}: a binary feature which activates when the vehicle violates the safety constraints or performs a crash.
\end{itemize}

The agent's policy is a rule-based policy, i.e., a set of parametrized rules, which is learned using Policy Gradients with Parameter-based Exploration (PGPE). It is important to notice that the agent's policy is not differentiable.

We perform $10$ PGPE updates of the agents and then we use the learning trajectories with \algname. In the behavioral cloning step, we use a linear layer to approximate the policy of the learner. 
Table~\ref{tab:rewweights} shows the normalized weights recovered by \algname~and the normalized real weights.  As shown in Table~\ref{tab:rewweights} and in Figure~\ref{fig:sumo}, the reward weights recovered are quite similar to the real ones.

\begin{table}[h!]
    \centering
    \begin{tabular}{c|cccc}
    \toprule
         &  Time & Jerk & Slow & Crash\\
         \hline
        Recovered Weights & 0.0401 & 0.0174 & 0.0000 & 0.9424 \\
        Real Weights & 0.0017 & 0.0003 & 0.0000 & 0.9980 \\
        \toprule
    \end{tabular}
    \caption{Reward weights for the autonomous simulate driving scenario.}
    \label{tab:rewweights}
\end{table}
\begin{figure}[h!]
    \centering
    \includegraphics{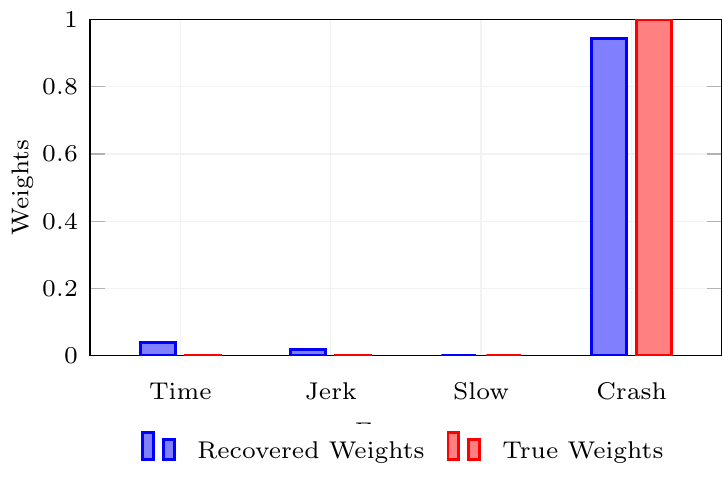}
    \caption{Reward weights for the autonomous simulate driving scenario.}
    \label{fig:sumo}
\end{figure}
\end{document}